\documentclass[manuscript]{jair}

\setcopyright{cc}
\copyrightyear{2026}
\acmDOI{10.1613/jair.1.19233}

\JAIRAE{Bettina Koenighofer}
\JAIRTrack{Integration of Logical Constraints in Deep Learning} %
\acmVolume{85}
\acmArticle{25}
\acmMonth{03}
\acmYear{2026}

\RequirePackage[
  datamodel=acmdatamodel,
  style=acmauthoryear,
  backend=biber,
  giveninits=true,
  uniquename=init
  ]{biblatex}

\addbibresource{ref.bib}

 \usepackage{tikz}
\usetikzlibrary{automata, arrows.meta, positioning}
\usepackage{local_macros}

\usetikzlibrary{matrix}

\definecolor{mplblue}{HTML}{1F77B4}
\definecolor{mplorange}{HTML}{FF7F0E}
\definecolor{mplgreen}{HTML}{2CA02C}
\usepackage[T1]{fontenc}
\usepackage{wrapfig}

\usepackage{amsmath}
\usepackage{algorithm}
\usepackage{algorithmic}
\theoremstyle{plain}
\newtheorem{assumption}{Assumption}
\newtheorem{theorem}{Theorem}[section]

\newtheorem{lemma}[theorem]{Lemma}

\theoremstyle{definition}
\newtheorem{definition}[theorem]{Definition}
\newtheorem{example}[theorem]{Example}

\newtheorem{problem}{Problem}
\theoremstyle{remark}
\newtheorem{remark}[theorem]{Remark}
\numberwithin{equation}{section}

\usepackage{subcaption}

\setcopyright{cc}
\begin{document}

\title[Average-Reward RL for Omega-Regular and Mean-Payoff Objectives]{Average Reward Reinforcement Learning for Omega-Regular and Mean-Payoff Objectives}

\author{Milad Kazemi}
\orcid{0000-0003-2716-2758}
\email{milad.kazemi@kcl.ac.uk}
\affiliation{%
  \institution{King’s College London}
  \city{London}
  \country{UK}
}

\author{Mateo Perez}
\orcid{0000-0003-4220-3212}
\email{Mateo.Perez@colorado.edu}
\affiliation{%
  \institution{University of Colorado Boulder}
  \city{Colorado}
  \state{Boulder}
  \country{USA}
}

\author{Fabio Somenzi}
\orcid{0000-0002-2085-2003}
\email{fabio@colorado.edu}
\affiliation{%
  \institution{University of Colorado Boulder}
  \city{Colorado}
  \state{Boulder}
  \country{USA}
}

\author{Sadegh Soudjani}
\orcid{0000-0003-1922-6678}
\email{sadegh@mpi-sws.org}
\affiliation{%
  \institution{University of Birmingham, UK, and MPI-SWS}
  \city{Kaiserslautern}
  \country{Germany}
}

\author{Ashutosh Trivedi}
\orcid{0000-0001-9346-0126}
\email{Ashutosh.Trivedi@colorado.edu}
\affiliation{%
  \institution{University of Colorado Boulder}
  \city{Colorado}
  \state{Boulder}
  \country{USA}
}

\author{Alvaro Velasquez}
\orcid{0000-0001-6757-105X}
\email{Alvaro.Velasquez@colorado.edu}
\affiliation{%
  \institution{University of Colorado Boulder}
  \city{Colorado}
  \state{Boulder}
  \country{USA}
}

\renewcommand{\shortauthors}{Kazemi, Perez, Somenzi, Soudjani, Trivedi \& Velasquez}

\begin{abstract}
Recent advances in reinforcement learning (RL) have renewed focus on the design of reward functions that shape agent behavior. 
Manually crafting such functions is often tedious and error-prone. 
A more principled alternative is to specify behavioral requirements using a formal, unambiguous language that can be automatically translated into a reward function. 
Omega-regular languages are a natural choice for this purpose, given their established role in formal verification and synthesis.
However, existing approaches using omega-regular specifications typically rely on discounted reward RL in an episodic setting, where the environment is periodically reset to an initial state during learning. 
This setup is misaligned with the semantics of omega-regular specifications, which describe properties over infinite behavior traces. 
In such cases, the average reward criterion and the continuing setting---where the agent interacts with the environment over a single, uninterrupted lifetime---are more appropriate.

To address the challenges of infinite-horizon, continuing tasks, we restrict our focus to the subclass of omega-regular languages known as absolute liveness specifications. 
These specifications cannot be violated by any finite prefix of the agent’s behavior, aligning naturally with the continuing setting.
We present the first model-free RL framework that translates absolute liveness specifications to average-reward objectives. 
In contrast to prior work, our approach enables learning in communicating Markov Decision Processes without episodic resetting. We further introduce a reward structure for lexicographic multi-objective optimization, where the goal is to maximize an external average-reward objective among the policies that also maximize the satisfaction probability of a given absolute liveness omega-regular specification.
Our method guarantees convergence in unknown communicating MDPs and supports on-the-fly reductions that do not require full knowledge of the environment, thus enabling model-free RL. Empirical results across various benchmarks demonstrate that our average-reward approach in the continuing setting is more effective than competing methods based on discounting.     
\end{abstract}

\received{23 May 2025}
\received[accepted]{21 January 2026}

\maketitle

\section{Introduction}
Reinforcement learning (RL)~\cite{Sutton18} is a foundational framework for sequential decision making under uncertainty, where agents learn to optimize their behavior through trial-and-error interaction with the environment. While RL has achieved superhuman performance in domains such as board games~\cite{AlphaGo, AlphaGoZero, AlphaZero, MuZero}, robotics~\cite{zhao2020sim,tang2025deep}, and resource optimization~\cite{mirhoseini2021graph,mao2016resource}, much of this success has been in \emph{episodic tasks}, i.e., settings where interactions naturally reset after a finite sequence, and learning proceeds by leveraging repeated experience across episode\Milad{s}.
In contrast, many real-world applications---such as autonomous monitoring, industrial process control, or mission planning---are inherently continuing tasks, where the agent must learn over a single, unending interaction with its environment. 
For such tasks, short-term rewards often fail to capture the long-term behavioral goals of interest. Instead, it is more natural to specify objectives in terms of structured, temporally extended criteria.
Formal languages, particularly \emph{$\omega$-regular languages} recognized by automata over infinite traces~\cite{BK08}, provide a rich and precise way to express such long-term goals. 
This has led to increasing interest in integrating formal methods with RL for the synthesis of correct-by-construction policies~\cite{belta2019formal, jagtap2020formal, majumdar2021symbolic}, enabling logic-based specifications to guide data-driven learning.
This paper explores how \emph{model-free average-reward RL}, a formulation especially suited for continuing tasks~\cite{mahadevan1996average, wan2021learning}, can be used to synthesize policies that satisfy $\omega$-regular objectives. 

\subsection{Why $\omega$-Regular Objectives?}
While many successful applications of RL rely on reward functions that depend only on the agent’s current state and action, it is often necessary---or more natural---to account for the agent’s history when defining learning objectives. 
This becomes especially important in settings with sparse rewards~\cite{adviceSAT}, partial observability~\cite{toro2019learning}, or temporally extended goals~\cite{camacho2019ltl}. In such cases, the agent’s goal is better expressed as a sequence of desirable or undesirable behaviors rather than as immediate outcomes.
Formal language structures---particularly automata over sequences of observations---have emerged as a powerful tool for specifying these non-Markovian objectives. 
Rooted in formal verification~\cite{vardi1985automatic}, this approach enables the use of automata with a variety of acceptance conditions to capture complex reward structures and behavioral constraints. In some cases, even natural language objectives can be systematically translated into automata representations~\cite{syntacticSugar}.

This line of work has gained significant traction in the formal synthesis of control policies~\cite{belta2019formal, jagtap2020formal, majumdar2021symbolic,lavaei2020formal}, where developers define high-level goals in a formal language and synthesis algorithms compute correct-by-construction policies---eliminating the need for manual, error-prone reward engineering~\cite{kress2018synthesis}. 
In this paper, we adopt this paradigm to synthesize policies using model-free RL for a class of non-Markovian objectives defined by $\omega$-regular languages~\cite{BK08}. 
They capture infinite sequences of semantically meaningful observations and provide a rich formalism for specifying long-term behavioral goals.

To operationalize such specifications within RL, we introduce nondeterministic reward machines, which encode the reward structure of $\omega$-regular automata. By constructing a product between the agent’s environment---modeled as a Markov Decision Process (MDP)---and the automaton, we reduce the synthesis problem to a Markovian task that can be addressed using standard model-free algorithms. This builds on the broader use of reward machines~\cite{icarte2018using}, which represent reward functions as automata over histories of semantically meaningful events. These machines provide a structured and interpretable way to incorporate temporal logic into RL and enable the transformation of non-Markovian objectives into equivalent Markovian formulations over an augmented state space~\cite{gaon2020reinforcement, XuWuNeiderTopcu21}. This transformation allows existing RL techniques to be applied to long-horizon, history-dependent tasks without modification.

\subsection{Limitations of Discounted Reward in Continuing Tasks.} Despite the widespread use of discounted reward formulations in RL, they are often ill-suited for continuing tasks. The discounted return, defined as the sum of future rewards scaled by a discount factor, ensures mathematical tractability by bounding the total return over infinite horizons. 
However, this formulation inherently prioritizes short-term gains over long-term performance, which misaligns with the objectives of many continuing systems. In practice, achieving a suitable approximation of long-run behavior requires setting the discount factor very close to one. Yet, doing so weakens the contraction properties of learning algorithms, leading to slow convergence and potential instability.

These issues are particularly pronounced in continuing environments, where no natural episode boundaries exist. While discounted RL has shown remarkable success in episodic domains~\cite{Mnih15,AlphaGo,mirhoseini2021graph}, its solutions are sensitive to initial state distributions—an undesirable property in continuing settings where learning should be state-agnostic over the long run. Furthermore, discounted formulations in continuing tasks are often incompatible with function approximation techniques commonly used in large-scale RL. In~\cite{Naik2019DiscountedRL}, the authors argue that in continuing tasks, discounted RL is fundamentally incompatible with function approximation and does not constitute a proper optimization problem. Unlike tabular settings where any policy is representable, most practical RL problems have state or action spaces that are too large for tables. This necessitates function approximation to find the best representable policy. However, without a well-posed objective function, there is often no representable policy that is unambiguously superior to all others \cite{Naik2019DiscountedRL}.

Discounted RL algorithms for continuing tasks often rely on increasing the discount factor close to one to approximate the Blackwell-optimal policy, and they will succeed particularly when action gaps are large~\cite{bellemare2016increasing}. However, two major obstacles hinder the use of discounted-value algorithms as proxies for average-value algorithms in practice. First, most algorithms that learn discounted value functions become increasingly unstable as the discount factor approaches one. Second, estimating the required critical discount factor is difficult since it is tied to the environment's unknown dynamics~\cite{Naik2019DiscountedRL}.
Finally, the complexity of representing the optimal solution for long-horizon problems is studied in \cite{lehnert2018value} by isolating the representation challenge from the learning challenge. The authors show that function approximation methods may fail if action gaps collapse, which can occur over long planning horizons. The approximator may lack the necessary ``resolution'' to distinguish the optimal action from sub-optimal ones.

Given the growing reliance on large neural architectures in modern RL, the limitations of discounted RL motivate the need for alternative formulations aligned with the demands of continuing tasks.
These limitations motivate the need to explore average-reward RL as a more principled alternative in continuing settings.

\subsection{Average-Reward RL}
A natural alternative to discounting in continuing environments is to optimize the agent’s average reward over time. 
This formulation aligns more directly with long-term performance objectives, making it particularly well-suited for tasks without episodic resets. However, average-reward RL introduces unique challenges. Unlike discounted RL—where the discount factor ensures convergence through a built-in contraction—average-reward methods rely on structural properties of the underlying MDP.
In particular, the convergence of model-free average-reward algorithms~\cite{mahadevan1996average, wei2020model, wan2021learning} typically requires the MDP to be communicating, meaning every state is reachable from every other state under some policy.
This communicating assumption is especially important---and potentially problematic---when synthesizing policies for $\omega$-regular specifications. 
The synthesis process involves constructing a product MDP between the agent’s environment and the automaton representing the specification. While the original MDP may satisfy the communicating property, the product MDP may not, creating a barrier to applying average-reward RL directly in such settings.

\subsection{Absolute Liveness Properties.}
Absolute liveness specifications form an important subclass of $\omega$-regular specifications and are prevalent throughout the temporal property hierarchy~\citep{alpern1985defining,manna1990hierarchy}.
These specifications are prefix-independent—their satisfaction is unaffected by the addition of finite prefixes to an infinite word—making them well-suited for continuing tasks.
Moreover, when the underlying MDP is communicating, absolute liveness specifications are satisfied with probability zero or one.

A key observation is that many temporally extended objectives, including those expressed in Linear Temporal Logic (LTL), can be framed in terms of absolute liveness semantics.
In particular, a satisfiable LTL formula $\varphi$ is an absolute liveness property if it is expressively equivalent to $\mathbf{F}\varphi$, where $\mathbf{F}\varphi$ denotes eventual satisfaction of $\varphi$.
Thus, the eventual-satisfaction semantics of an arbitrary $\omega$-regular or LTL specification $\varphi$ can be captured by the absolute liveness property $\mathbf{F}\varphi$, motivating their use in long-horizon learning tasks.

\subsection{Contributions.}
The main contribution of this paper is an average-reward model-free reinforcement learning algorithm for synthesizing policies that satisfy a given absolute liveness $\omega$-regular specification. Our approach preserves the communicating property in the product construction, thereby enabling the learning of optimal policies without requiring episodic resetting. Although we assume the underlying MDP is communicating, a naive synchronization of the MDP with the automaton does not, in general, preserve this property. To address this, we introduce a reward machine construction and an augmented specification that guarantee the synchronized product MDP remains communicating.
This paper makes the following primary contributions:
\begin{enumerate}
\item We present average-reward model-free RL algorithms for:
\begin{enumerate}
\item synthesizing policies that satisfy a given \emph{absolute liveness} $\omega$-regular specification, and
\item solving lexicographic multi-objective optimizations in which the goal is to maximize a mean-payoff objective among the set of satisfying policies.
\end{enumerate}
\item We study the structure of $\omega$-regular specifications, including safety, liveness, absolute liveness, fairness, and stability. We introduce novel automata-theoretic characterizations of \emph{absolute liveness} (Lemma~\ref{th:st-containment}) and \emph{stable} specifications (Lemma~\ref{th:al-containment}), which may be of independent interest.

\item We analyze the convergence of our algorithms and demonstrate that naive synchronization of a communicating MDP with a specification automaton can break the communicating property. To address this, we construct \emph{reward machines} that preserve communication, enabling optimal learning without episodic resets.

\item Our work is the \emph{first to establish a formal reduction from absolute liveness $\omega$-regular objectives to average-reward RL} in both single- and multi-objective settings with convergence guarantees.

\item We generalize our prior results from communicating MDPs~\cite{kazemi2022translating} to the broader class of \emph{weakly communicating MDPs}, thereby 
extending the applicability of our framework to environments where some states may not be mutually reachable, yet still support meaningful long-run average-reward optimization.
\item We implement our proposed reduction using Differential Q-learning and evaluate it on a suite of communicating MDPs with absolute liveness specifications. Unlike prior methods, our approach composes the product MDP on-the-fly without requiring episodic resetting. Empirical results show that our method reliably converges to optimal policies in the continuing setting, even when prior approaches fail due to non-communicating product MDPs. In the episodic setting, our approach remains competitive in training time while requiring no resets, highlighting its robustness and practical effectiveness.
\end{enumerate}

To the best of our knowledge, this is the first framework that provides a formal translation from absolute liveness $\omega$-regular objectives to average-reward RL with convergence guarantees. Preliminary results were reported at the AAMAS conference~\cite{kazemi2022translating}; the current paper extends that work with complete proofs, generalizations (weakly communicating MDPs), and novel results on multi-objective synthesis.

\subsection{Organization.}
The paper is organized as follows. Section~\ref{sec:preliminaries} includes the preliminaries, qualitative and quantitative objectives, RL for continuing tasks, and the problem statements.
Section~\ref{sec:classes_spec} provides our results on qualitative objectives suitable for continual learning.
Section~\ref{sec:result} establishes our theoretical results on average-reward RL for qualitative objectives.
Section~\ref{sec:morl} presents our algorithm for average-reward RL with lexicographic objectives, where the primary goal is to satisfy a qualitative specification and the secondary goal is to optimize a quantitative average-reward objective.
In Section~\ref{sec:case-studies}, we test the performance of our approach on different case studies and compare it against prior techniques. Section~\ref{sec:related-work} discusses related work before concluding in Section~\ref{sec:conclusion}.

\section{Preliminaries and Problem Statement}
\label{sec:preliminaries}

This section presents the foundational concepts and formalizes the problem setting. 
We begin with Markov Decision Processes (MDPs) as the underlying model of the environment in Subsection~\ref{subsec:MDP}. In Subsection~\ref{subsec:quantitative}, we introduce probabilistic reward machines, which serve as quantitative objectives for RL-based policy synthesis. Subsection~\ref{subsec:qualitative} covers $\omega$-regular specifications as qualitative objectives for policy synthesis. We then formulate the formal problem statements in the context of reinforcement learning for continuing tasks in Subsection~\ref{subsec:RL_continual}, and conclude by discussing the challenges of average-reward RL in Subsection~\ref{subsec:challenges}.

\subsection{The Environment: Markov Decision Processes}
Let $\DIST(S)$ be the set of distributions over a given set $S$.
\label{subsec:MDP}
\begin{definition} [Markov decision process (MDP)]
An MDP $\Mm$ is a tuple $(S, s_0, A, T, AP, L)$ where
\begin{itemize}
    \item $S$ is a finite set of states, with $s_0 \in S$ as the initial state;
    \item $A$ is a finite set of {\it actions};
    \item $T \colon S \times A \pto \DIST(S)$ is the {probabilistic transition function};
    \item $AP$ is the set of {\it atomic propositions}; and 
    \item $L\colon S \to 2^{AP}$ is the {\it labeling function}.
\end{itemize}
For any state $s \in S$, we let $A(s)$ denote the set of actions that can be selected in
state $s$.
An MDP is a Markov chain if $A(s)$ is a singleton for all $s \in S$.
\end{definition}

\subsubsection*{Runs.}
For states $s, s' \in S$ and $a \in A(s)$, $T(s,a)(s')$ equals the conditional probability $p(s'\!\mid\! s, a)$, which is the probability of jumping to state $s'$ from state $s$ under action $a$.
A \emph{run} of $\Mm$ is an $\omega$-word $\seq{s_0, a_1, s_1, \ldots} \in S
\times (A \times S)^\omega$ such that $p(s_{i+1} \!\mid\! s_{i}, a_{i+1}) > 0$ for all $i
\geq 0$.
A \emph{finite} run is a finite such sequence. 
For a run $r = \seq{s_0, a_1, s_1, \ldots}$ we define the corresponding
\emph{labeled} run as $L(r) = \seq{L(s_0), L(s_1), \ldots} \in (2^{AP})^\omega$.
We write $\Runs^\Mm (\FRuns^\Mm)$  for the set of (finite) runs of the MDP
$\Mm$  and $\Runs{}^\Mm(s) (\FRuns{}^\Mm(s))$  for the set of (finite) runs of
the MDP $\Mm$ from $s$.  We write $\Last(r)$ for the last state
of a finite run $r$. The superscript $\Mm$ will be dropped when clear from context.

\subsubsection*{Strategies or Policies.}
A {\it strategy} in $\Mm$ is a function $\sigma \colon \FRuns \to \DIST(A)$ such that
$\supp(\sigma(r)) \subseteq A(\Last(r))$, where $\supp(d)$ denotes the support
of the distribution $d$.
A memory skeleton is a tuple $M_{\mathfrak s} = (M, \Sigma, m_0, \alpha_{\mathfrak s})$ where $M$ is a finite set of
memory states, $\Sigma$ is a finite alphabet, $m_0\in M$ is the initial state, and $\alpha_{\mathfrak s}: M \times \Sigma \to M$
is the memory update function.
We define the extended memory update function $\hat{\alpha}_{\mathfrak s}: M {\times} \Sigma^* \to M$ inductively 
in the usual way.
A finite-memory strategy for $\Mm$ over a memory skeleton $M_{\mathfrak s}$ is a Mealy machine 
$(M_{\mathfrak s}, \alpha_{\mathfrak a})$ where $\alpha_{\mathfrak a}: S {\times} M \to \DIST(A)$ is the {\it next-action function} 
that suggests the next action based on the MDP and memory state. 
The semantics of a finite-memory strategy $(M, \alpha_{\mathfrak a})$ is given as a strategy 
$\sigma: \FRuns \to \DIST(A)$ such that, for every $r \in \FRuns$, we have that 
$\sigma(r) = \alpha_{\mathfrak a}(\Last(r), \hat{\alpha}_{\mathfrak s}(m_0, L(r)))$.

A strategy $\sigma$ is {\it pure} if $\sigma(r)$ is a point
distribution for  all runs $r \in \FRuns^\Mm$ and a strategy is {\it mixed} (short for
strictly mixed) if $\supp(\sigma(r)) = A(\Last(r))$ for  all runs
$r \in \FRuns^\Mm$.
Let $\Runs^\Mm_\sigma(s)$ denote the subset of runs $\Runs^\Mm(s)$ from initial state $s$ that
follow strategy $\sigma$.
Let $\Strat_\Mm$ be the set of all strategies.
We say that $\sigma$ is \emph{stationary} if $\Last(r) = \Last(r')$ implies
$\sigma(r) = \sigma(r')$ for all runs $r, r' \in \FRuns^\Mm$.
A stationary strategy can be given as a function $\sigma: S \to \DIST(A)$.  
A strategy is \emph{positional} if it is both pure and stationary. 

\subsubsection*{Probability Space.}
An MDP $\Mm$ under strategy $\sigma$ results in a Markov chain $\Mm_\sigma$.
If $\sigma$ is a finite-memory strategy, then $\Mm_\sigma$ is a finite-state
Markov chain.
The behavior of an MDP $\Mm$ under a strategy $\sigma$ and starting state
$s \in S$ is defined on the probability space
$(\Runs^\Mm_\sigma(s), \Ff_{\Runs^\Mm_\sigma(s)}, \Pr^\Mm_\sigma(s))$ over
the set of infinite runs of $\sigma$ with starting state $s$. $\Ff_{\Runs^\Mm_\sigma(s)}$ denotes the sigma-algebra on these runs generated by the underlying MDP, and $\Pr^\Mm_\sigma(s)$ is the probability distribution over these runs constructed inductively. 
Given a random variable 
$f\colon \Runs^\Mm \to \Real$, we denote by $\eE^{\Mm}_{\sigma}(s)\{f\}$
the
expectation of $f$ over the runs of $\Mm$ starting from $s$ that
follow $\sigma$.

\subsubsection*{Structural Properties of MDPs.}
A \emph{sub-MDP} of $\Mm$ is an MDP $\Mm' = (S', A', T', AP, L')$, where $S' \subset
S$, $A' \subseteq A$ is such that $A'(s) \subseteq A(s)$ for all $s \in S'$,
and $T'$ and $L'$ are $T$ and $L$ restricted to $S'$ and
$A'$.
An \emph{end-component}~\cite{Luca98} of an MDP $\Mm$ is a sub-MDP $\Mm'$ that is closed under the transitions in $T'$ and such that for every state pair $s, s' \in S'$ there is a 
strategy that can reach $s'$ from $s$ with positive probability. 
A maximal end-component is an end-component that is maximal under
set-inclusion.
Every state $s$ of an MDP $\Mm$ belongs to at most one maximal end-component.
A {\it bottom strongly connected component} (BSCC) of a Markov chain is any
of its end-components.

 An MDP $\Mm$ is {\it communicating} if it is equal to its (only) maximal end-component.
An MDP $\Mm$ is {\it weakly communicating} if its state space can be decomposed into two sets: in the first set, each state is reachable from every other state in the set under some strategy; in the second set, all states are transient under all strategies, meaning that the probability of starting from $s$ in this set and returning to $s$ is less than one under any strategy.

\subsection{Quantitative Objectives: Probabilistic Reward Machines}
\label{subsec:quantitative}
In the classical RL literature, the learning objective is specified using
\emph{Markovian} reward functions, i.e., a function $\rho: S \times A \times S \to \Real$
assigning utility to state-action pairs. 
A \emph{rewardful} MDP is a tuple $\Mm = (S, s_0, A, T, \rho)$, where $S, s_0, A,$ and $T$ are defined as for MDPs, and $\rho$ is a Markovian reward function.
A rewardful MDP $\Mm$ under a
strategy $\sigma$ determines a sequence of random rewards
${\rho(X_{i-1}, Y_i, X_i)}_{i \geq 1}$, where $X_i$ and $Y_i$ are the random variables denoting the $i$-th state and action, respectively.
  For
$\lambda \in [0, 1)$, the \emph{discounted reward} under  $\sigma$ is defined as:
\[\EDisct(\lambda)^\Mm_\sigma(s) := \lim_{N \to \infty} \eE^\Mm_\sigma(s) \Bigl\{\sum_{i =1}^N\lambda^{i-1} \rho(X_{i-1}, Y_i, X_i)\Bigr\},\]
while the \emph{average  reward} is defined as
\[
\EAvg^\Mm_\sigma(s) := \liminf_{N \to \infty} \frac{1}{N} \eE^\Mm_\sigma(s)\Bigl\{\sum_{i =1}^N \rho(X_{i-1}, Y_i, X_i)\Bigr\}.
\]
The average reward is defined using $\liminf$ of the time-averaged expected reward, which is standard in average-reward maximization~\cite{Puterman94}. For a finite MDP with bounded rewards, optimal stationary policies exist for which the average reward converges. Hence, for optimal policies, $\liminf$, $\limsup$, and the limit coincide and induce the same optimal value.

For an objective $\ECost^\Mm {\in} \{\EDisct(\lambda)^\Mm,
\EAvg^\Mm\}$ and state $s$, we define the optimal reward
$\ECost^\Mm_*(s)$ as $\sup_{\sigma \in \Strat_\Mm} \ECost^\Mm_\sigma(s)$.  
A strategy $\sigma$ is optimal for $\ECost^\Mm$ if
$\ECost^\Mm_\sigma(s) {=} \ECost^\Mm_*(s)$ for all $s \in S$.
Optimal reward and strategies for these objectives can be computed in
polynomial time~\cite{Puterman94}. 

Often, complex learning objectives cannot be expressed using Markovian reward
functions. 
A recent trend is to express learning objectives using finite-state reward machines (see, e.g., \cite{icarte2018using, icarte2022reward}), which provides a structured formalism to decompose high-level tasks and exploits the internal structure of the reward function. 
These reward machines often serve as an intermediate representations for formal languages, enabling the automatic translation of logical specifications into reward structures~\cite{camacho2019ltl}. 
Recent works also include learning reward machines directly from agent interactions to handle non-Markovian rewards~\cite{gaon2020reinforcement} and partial observability~\cite{Icarte2023}, with extensions to multi-agent stochastic games~\cite{hu2024reinforcement} and inverse reinforcement learning~\cite{shehab2025learning}.

For the objectives we consider, we require a more expressive variant of reward machines that allows probabilistic and nondeterministic transitions, as well as spurious $\epsilon$-labeled transitions. We refer to these as probabilistic reward machines, with the understanding that any of the three transition types may be absent. For instance, the definition reduces to that of a nondeterministic reward machine as in \cite{kazemi2022translating} when all enabled transitions are taken with probability one. The use of $\epsilon$-transitions serves only to streamline the presentation of our technical results.

\begin{definition}[Probabilistic Reward Machines]
A \emph{probabilistic reward machine} is a tuple $\Rr = (\Sigma_\epsilon, U\times U^p, (u_0, u^p_0), \delta_r, T^p, \rho)$
where
\begin{itemize}
    \item $\Sigma_\epsilon = (\Sigma\cup \{\epsilon\})$ with $\Sigma$ being a finite alphabet and $\epsilon$ indicating a silent transition,
    \item 
    $U$ and $U^p$ are two finite sets of states, 
    \item $(u_0, u^p_0) \in U\times U^p$ is the starting state,
    \item $\delta_r: U \times \Sigma_\epsilon \to 2^U$ is the transition relation (that allows nondeterminism), 
    \item $T^p \colon U^p \pto \DIST(U^p)$ is the probabilistic transition function, and 
\item $\rho: U \times U^p \times \Sigma_\epsilon \times U \times U^p \to \Real$ is the reward function. 
\end{itemize}
    To simplify the notation we let $U^{\Rr}= U \times U^p$ and $u^{\Rr}_0= (u_0, u^p_0)$.
\end{definition}

\begin{definition}[Product (rewardful) MDP]
Given an MDP $\Mm = (S, s_0, A, T, AP, L)$, a reward machine $\Rr = (\Sigma_\epsilon, U^\Rr, u^\Rr_0, \delta_r, T^p, \rho)$ with the alphabet $\Sigma = 2^{AP}$, and the labeling function $L:S\rightarrow\Sigma$, their product
\[\Mm\times\Rr = (S{\times} U^\Rr, s_0 {\times} u^\Rr_0, (A {\times} U^\Rr) \cup\{\epsilon\},
T^\times, \rho^\times)
\]
is a rewardful MDP where
$T^\times{:} (S {\times} U^\Rr) \times ((A {\times} U^\Rr) \cup \{\epsilon\}) \to \DIST(S{\times} U^\Rr)$ is such that
\begin{multline*}
T^\times((s,(u, u^p)), \alpha)(({s}',({u}', {u^p}'))) =
\begin{cases}
T(s,a)({s}')\cdot T^p(u^p)({u^p}') & \text{if } \alpha = (a, ({u}',{u^p}')) \text{ and } (u,L(s),{u}') \in \delta_r \\
1 & \text{if } \alpha = \epsilon, s = s', \text{ and } \delta(u, \epsilon, {u}') \in \delta_r \\
0 & \text{otherwise.}
\end{cases}
\end{multline*}
and $\rho^\times: (S{\times} U^\Rr) \times ((A {\times}
U^\Rr) \cup \{\epsilon\}) \times (S{\times} U^\Rr)\to \Real$ is defined such that 
\begin{equation*}
\rho^\times((s,u^\Rr), \alpha, (s', {u^\Rr}'))=
\begin{cases}
\rho(u^\Rr, L(s), {u^\Rr}') & \text{if } \alpha = (a, {u^\Rr}')\\ %
\rho(u^\Rr, \epsilon, {u^\Rr}') & \text{if } \alpha = \epsilon.
\end{cases}
\end{equation*}
\end{definition}
The Product MDP effectively runs the MDP and the reward machine in parallel. When an action $a\in A$ is taken, a transition occurs in the MDP to update its state, and an appropriate transition occurs in the reward machine to update its state. When an $\epsilon$-action is taken, the state of the MDP remains the same, and one of the $\epsilon$-transitions of the reward machine is activated to update its state.

A deterministic reward machine is retrieved from the above definition by setting
$\mid\!\delta_r(u,a)\!\mid\le 1$ and
$T^p(u^p)\in\{0,1\}$ for all $u\in U$, $a\in\Sigma_\epsilon$, and $u^p\in U^p$.
For technical convenience, we assume that $\Mm{\times}\Rr$ contains only states reachable from $(s_0, u_0^\Rr)$.
For both discounted and average objectives, the optimal strategies of
$\Mm{\times}\Rr$ are positional on $\Mm{\times}\Rr$.
Moreover, these positional strategies characterize a finite memory strategy (with memory skeleton based on the  states of $\Rr$ and the next-action function based on the positional strategy) over $\Mm$
maximizing the learning objective given by $\Rr$. 

\subsection{Qualitative Objectives: Omega-Regular Specifications}
\label{subsec:qualitative}
Formal specification languages, such as B\"uchi automata and logic, provide a rigorous and unambiguous mechanism to express learning
objectives for continuing tasks.
There is a growing
trend~\cite{hahn2019omega,sadigh2014learning,hasanbeig2019certified,bozkurt2019control,kazemi2020formal}
toward expressing learning objectives in RL using linear temporal logic (LTL) and
$\omega$-regular languages (which strictly generalize LTL).
We will describe $\omega$-regular languages by \emph{good-for-MDP} B\"uchi automata~\cite{Hahn20}.

Linear Temporal Logic (LTL) \cite{BK08} is a temporal logic that is often used to  specify objectives in human-readable form.  
Given a set of atomic propositions $AP$, the LTL formulae over $AP$ can be defined via the following grammar: 
\begin{equation}
\varphi := a \mid \neg \varphi \mid \varphi \vee \varphi \mid  \nextt \varphi \mid \varphi \until  \varphi,
\end{equation}
where $a\in AP$, using negation $\neg$, disjunction $\vee$, next $\nextt$, and until $\until$ operators.
Additional operators are introduced as abbreviations:
$\top \rmdef a \vee \neg a$; $\bot \rmdef \neg \top$;
$\varphi \wedge \psi \rmdef \neg (\neg \varphi \vee \neg \psi)$;
$\varphi \rightarrow \psi \rmdef \neg \varphi \vee \psi$;
$\eventually \varphi \rmdef \top \until \varphi$; and
$\always \varphi \rmdef \neg \eventually \neg \varphi$.  We write $w
\models \varphi$ if
$\omega$-word $w$ over $2^{AP}$ satisfies LTL formula $\varphi$. 
The satisfaction relation is defined inductively
\cite{BK08}.
We will provide details of classes of specifications including safety, liveness, and fairness in Subsection~\ref{sec:classes_spec}.

Nondeterministic B\"uchi automata are finite state acceptors for
all $\omega$-regular languages.
\begin{definition}[B\"uchi Automata]
A (nondeterministic) \emph{B\"uchi automaton} is a tuple
${\mathcal A} = (\Sigma,Q,q_0,\delta,F)$, where 
\begin{itemize}
    \item $\Sigma$ is a finite \emph{alphabet}, 
    \item $Q$ is a finite set of \emph{states}, 
    \item $q_0 \in Q$ is the \emph{initial state}, 
\item $\delta \colon Q \times \Sigma \to 2^Q$ is the \emph{transition function}, and 
\item $F \subset Q \times \Sigma\times Q$ is the set of \emph{accepting transitions}.    
\end{itemize}

A \emph{run} $r$ of ${\mathcal A}$ on $w \in \Sigma^\omega$
is an $\omega$-word $\seq{r_0, w_0, r_1, w_1, \ldots}\in(Q \times \Sigma)^\omega$ such that $r_0 = q_0$ and, for $i > 0$,
$r_i \in \delta(r_{i-1},w_{i-1})$.  Each triple
$(r_{i-1},w_{i-1},r_i)$ is a \emph{transition} of ${\mathcal A}$.
We write $\infi(r)$ for the set of transitions that appear infinitely
often in the run $r$.
A run $r$ of ${\mathcal A}$ is \emph{accepting} if $\infi(r) \cap
F \neq \emptyset$. 
The \emph{language} $\Ll(\Aa)$ of ${\mathcal A}$ is the subset of words in
$\Sigma^\omega$ with accepting runs in ${\mathcal A}$.
A language is $\omega$-\emph{regular} if it is accepted
by a B\"uchi automaton.
\end{definition}

\subsubsection*{Good-for-MDP B\"uchi Automata.}
Given an MDP $\Mm$ and a specification $\varphi$ represented with
a B\"uchi automaton $\Aa = (\Sigma,Q,q_0,\delta,F)$,
we want to compute an optimal strategy satisfying the
objective.
We define the satisfaction probability of $\sigma$ from starting state
$s$ as 
\begin{equation*}
\PSemSat^\Mm_{\Aa}(s, \sigma)
=   \Pr{}^\Mm_\sigma(s) \bigl\{ r \in \Runs_\sigma^\Mm(s) \colon
  L(r) \in \Ll(\Aa) \bigr\}.
\end{equation*}
The optimal satisfaction probability
$\PSemSat^\Mm_{\Aa}(s)$ for specification $\Aa$
is defined as $\sup_{\sigma \in \Strat_\Mm} \Pr^\Mm_\sigma(s, \sigma)$ and we say
that $\sigma \in \Strat_\Mm$ is an optimal strategy for $\Aa$ if
$\PSemSat^\Mm_{\Aa}(s, \sigma) (s) = \PSemSat^\Mm_{\Aa}(s)$.

\begin{definition}[Product MDP]
Given an MDP
$\Mm = ( S, s_0, A, T, AP, L )$
and automaton
$\mathcal{A} = (\Sigma, Q, q_0, \delta, F )$ with alphabet $\Sigma = 2^{AP}$,
the
\emph{product}
$\Mm \times \mathcal{A} = ( S \times Q, (s_0,q_0), A \times Q, T^\times, F^\times )$
is an MDP with initial
state $(s_0,q_0)$
and accepting transitions $F^\times$ where
$T^\times : (S \times Q) \times (A \times Q) \pto \DIST(S \times Q)$
is defined by
\begin{equation*}
T^\times((s,q),(a,q'))(({s}',{q}')) =
\begin{cases}
T(s,a)({s}') & \text{if } (q,L(s),{q}') \in \delta \\
0 & \text{otherwise.}
\end{cases}
\end{equation*}
The accepting transition set
$F^\times \subseteq (S \times Q) \times (A \times Q) \times (S
\times Q)$ is defined by $((s,q),(a,q'),(s',q')) \in F^\times$
if, and only if, $(q,L(s),q') \in F$ and $T(s,a)(s') > 0$.
A strategy $\sigma^\times$ on the product defines a strategy $\sigma$ on the
MDP with the same value, and vice versa.  Note that for a stationary
$\sigma^\times$, the strategy $\sigma$ may need memory.
End-components and runs of the product MDP are defined just like for MDPs.    
\end{definition}

A run $r$ of $\Mm {\times} \mathcal{A}$ is accepting if
$\infi(r) \cap F^\times \neq \emptyset$.
The \emph{syntactic satisfaction}
probability is the probability of accepting runs:
\[
\PSat^\Mm_{\Aa}((s,q), \sigma^\times) = 
\Pr{}^{\Mm\times\Aa}_{\sigma^\times}(s,q) \Set{ r \in
\Runs_{\sigma^\times}^{\Mm\times\Aa}(s,q) : \infi(r) \cap F^\times
\neq \emptyset }.
\]
Similarly, we define $\PSat^\Mm_{\Aa}(s)$ as the optimal probability over the
product, i.e., $\sup_{\sigma^\times}\big(\PSat^\Mm_{\Aa}((s,q_0),
\sigma^\times)\big)$.
For a deterministic $\Aa$ the equality $\PSat^\Mm_{\Aa}(s) = \PSemSat^\Mm_{\Aa}(s)$ holds; however, this is not guaranteed for nondeterministic
B\"uchi automata as the optimal resolution of nondeterministic
choices may require access to future events.
This motivates the definition of a good-for-MDP nondeterminisitc B\"uchi automata.
\begin{definition}[Good-for-MDP~\cite{Hahn20}]
A B\"uchi automaton $\Aa$ is \emph{good for MDPs} (GFM),
if $\PSat^\Mm_{\Aa}(s_0) = \PSemSat^\Mm_{\Aa}(s_0)$ holds for all
MDPs $\Mm$ and starting states $s_0$.
\end{definition}
Note that every $\omega$-regular objective can be expressed as a GFM automaton \cite{Hahn20}.
A popular class of GFM automata is suitable limit-deterministic B\"uchi automata~\cite{Hahn2015LazyPM,sickert2016limit}.
There are other types of automata that are good-for-MDPs.  For example, in parity automata, each transition is assigned an integer priority.  A run of a parity automaton is accepting if the highest recurring priority is odd.  In this paper, we only use GFM B\"uchi automata.

The satisfaction of an $\omega$-regular objective given as a GFM automaton $\Aa$
by an MDP $\Mm$ can be formulated in terms of the accepting maximal
end-components of the product $\Mm {\times} \mathcal{A}$, i.e., the maximal
end-components that contain an accepting transition from $\F^\times$.  
The optimal satisfaction probabilities and strategies can be computed by
first computing the accepting maximal end-components of $\Mm\times \Aa$ and then
maximizing the probability to reach states in such components.
The optimal strategies are positional on $\Mm\times\Aa$ and induce finite-memory strategies over $\Mm$ that maximize the satisfaction probability of the learning objective given by $\Aa$.

\subsection{Reinforcement Learning for Continuing Tasks and Problem Statements}
\label{subsec:RL_continual}

RL is a sampling-based optimization approach where an agent learns to optimize its strategy by repeatedly interacting with the environment relying on the reinforcements (numerical reward signals) it
receives for its actions.
We focus on the model-free approach to RL, where the learner computes optimal strategies without explicitly estimating the transition probabilities and rewards.
These approaches are asymptotically more space-efficient \cite{Strehl06} than model-based RL and have been shown to scale well \cite{Mnih15,AlphaGo}.
Some prominent model-free RL algorithms for discounted and average reward objectives include Q-learning and TD($\lambda$)~\cite{Sutton18} and Differential Q-learning~\cite{wan2021learning}.

In some applications, such as running a maze or playing tic-tac-toe, the
interaction between the agent and the environment naturally breaks into finite
length learning sequences, called episodes.
Thus the agent optimizes its strategy by combining its experience over different
episodes.
We call such tasks \emph{episodic}.
On the other hand, for some applications---such as process control and reactive
systems---this interaction continues ad-infinitum and the agent learns
over a single lifetime.
We call such tasks \emph{continuing}.
This paper develops a model-free RL algorithm for continuing tasks where the learning objective is an $\omega$-regular specification given as a GFM automaton.
Prior solutions~\cite{hahn2019omega,sadigh2014learning,hasanbeig2019certified,bozkurt2019control}
focused on episodic setting and have proposed a model-free reduction from
$\omega$-regular objectives to discounted-reward objectives.
Several researchers~\cite{Naik2019DiscountedRL,Sutton18} have made the case
for adopting average reward formulation for continuing tasks due to several
limitations of discounted-reward RL in continuing tasks.
This paper investigates a model-free reduction from $\omega$-regular objectives
to average-reward objectives in model-free RL.

\begin{problem}[$\omega$-Regular to Average Reward Translation]
\label{prob}
  Given an unknown communicating MDP $\Mm = (S, s_0, A, T, AP, L)$ and a GFM automaton $\Aa =
  (\Sigma, Q, q_0,\delta, F)$, the \emph{reward translation} 
  problem is to design a reward machine $\Rr$ such that an optimal positional strategy maximizing the average reward for $\Mm{\times}\Rr$ provides a finite memory strategy maximizing
  the satisfaction probability of $\Ll(\Aa)$ in $\Mm$. 
\end{problem}

We also consider the lexicographic multi-objective optimization in
which the goal is to maximize a mean-payoff objective over those policies that satisfy a given liveness specification.

\begin{problem}[Lexicographic $\omega$-regular and Average Reward]
\label{prob_lexo}
  Given an unknown communicating MDP $\Mm = (S, s_0, A, T, AP, L)$, a GFM automaton $\Aa =
  (\Sigma, Q, q_0,\delta, F)$, and a reward function $\rho(s, s')$, the problem is to design an algorithm such that the resulting policy maximizes the average reward
  \[
\EAvg^\Mm_\sigma(s) := \liminf_{N \to \infty} \frac{1}{N} \eE^\Mm_\sigma(s)\left\{\sum_{i=1}^N \rho(s_{i-1}, a_i, s_i)\right\},
\]
  among the policies that maximize the satisfaction probability of $\Aa$ in $\Mm$. 
\end{problem}

Next we give an account of challenges with average-reward RL, identify suitable classes of qualitative temporal specifications in Section~\ref{sec:classes_spec}, and then solve Problems~\ref{prob}-\ref{prob_lexo} respectively in Sections~\ref{sec:result}-\ref{sec:morl} under appropriate assumptions on the structure of the underlying MDP.

\subsection{Average-Reward Reinforcement Learning}
\label{subsec:challenges}

Given an MDP $\Mm$, reward machine $\Rr$, and an objective (discounted or average reward), 
an optimal strategy can be computed in polynomial time using linear programming~\cite{Puterman94}.  
Similarly, graph-theoretic techniques to find maximal end-components can be
combined with linear programming to compute optimal strategies for
$\omega$-regular objectives~\cite{Hahn2015LazyPM}. 
However, such
techniques are not applicable when the transitions or reward structure of the MDP is unknown.
The existing average-reward RL algorithms such as Differential Q-learning provide
convergence guarantees under the assumption that the MDP $\Mm$ is 
communicating~\cite{wan2021learning}.
Thus, for the reward translation in Problem~\ref{prob} to be effective, we need the product $\Mm\times\Aa$ to be communicating. 
Unfortunately, even when $\Mm$ is communicating,
$\Mm\times\Aa$ may violate the communicating requirement. We tackle this issue in Section~\ref{sec:result}. 

The communicating assumption requires that, for every pair of states $s$ and $s'$, there is a policy under which the MDP can reach $s'$ from $s$ in a finite number of steps with positive probability.
Under the communicating assumption, there exists a unique
optimal reward rate $r^\star$ that is independent of the start state.
The differential Q-learning algorithm constructs a table of estimates $Q_t:S\times U\rightarrow\mathbb R$ at each time step $t$. Let us denote the state-action pair of the MDP at time $t$ by $(s_t,u_t)$.
Then, differential Q-learning updates the table as follows:
\begin{align*}
Q_{t+1}(s,u) &:= Q_t(s,u)&\text{ for all } (s,u) &\neq (s_t, u_t)\\
      Q_{t+1}(s,u) &:= Q_t(s_t,u_t) + \alpha_t\delta_t, &\text{ for } (s,u) &= (s_t, u_t),
\end{align*}
where $\alpha_t$ is the step-size at step $t$, and $\delta_t$ is the temporal difference error defined as
\[
\delta_t := \rho_{t+1} - \bar r_t + \max _{u}Q_t(s_{t+1},u) - Q_t(s_{t},u_{t}) \text{ and } 
    \bar r_{t+1} := \bar r_t+\eta\cdot\alpha_t\cdot\delta_t,
\]
where $\rho_{t+1}$ is the reward received for the transition $(s_t,u_t,s_{t+1})$,
$\bar r_t$ is a scalar estimate of the optimal reward rate $r^\star$,
and $\eta$ is a positive constant.
It is shown in~\cite{wan2021learning} that $\bar{r}_t$ converges to $r^\star$ under the following assumptions:
\begin{enumerate}[(a)]
\item the MDP is communicating;
\item the associated Bellman equation has a unique solution up to a constant;
\item the step sizes $\alpha_t$ decrease appropriately with $t$;
\item all state–action pairs are updated infinitely often; and
\item the ratio of the update frequency of the most-updated state–action pair to the least-updated one is finite.
\end{enumerate}

\section{Specifications Suitable for Continual Learning}
\label{sec:classes_spec}

In this section, we study and characterize temporal logic specifications that express various classes of tasks. We then identify those suitable for continual reinforcement learning and provide solutions to Problems~\ref{prob}–\ref{prob_lexo} in subsequent sections.
Two important classes of temporal logic specifications are \emph{safety} and \emph{liveness} properties~\cite{alpern1985defining,manna1990hierarchy}. Loosely speaking, safety specifications ensure that \emph{something~bad ~never~happens}, while liveness specifications guarantee that \emph{something good eventually happens}. 
Below, we briefly recall the formal definition of safety and then focus on liveness.

\begin{definition}[Safety and Liveness]
Let $w$ be a sequence over alphabet $\Sigma$ and let $w(i)$ represent the prefix of $w$ of length $i+1$; i.e., $x(i)= (x_0, x_1, \ldots, x_i)$. 
\begin{itemize}
    \item A language $\Ll\subseteq \Sigma^{\omega}$ is a safety specification if, and only if, for all $\omega$-sequences $w\in \Sigma^{\omega}$ not in $\Ll$,
some prefix of $w$ cannot be extended to an $\omega$-sequence in the language. 
\item A language $\Ll \subseteq \Sigma^\omega$ is a liveness specification if, and only if, we can extend any finite sequence in $\Sigma^*$ to an $\omega$-sequence in the language. In other words, the set $\{w(i): w\in \Ll\}$ is $\Sigma^*$. For instance $\psi_1 = a \vee \eventually b$ is a liveness specification.
\end{itemize}
\end{definition}

\begin{definition}[Absolute Liveness]
 A language $\Ll \subseteq \Sigma^\omega$ is an \emph{absolute liveness} specification if, and only if, $\Ll$ is non-empty and, for every finite sequence $\bar w\in \Sigma^*$ and every $w\in \Ll$, $\bar w w\in \Ll$ holds.  That is, appending an arbitrary finite
prefix to an accepted word produces an accepted word. This implies that an LTL specification $\varphi$ is absolute liveness specification if $\varphi$ is satisfiable and $\varphi$ is equivalent to $\eventually \varphi$.  One can observe that $\psi_1$ is not an absolute liveness specification since adding the prefix $\neg a$ to a trace that does not satisfy $\eventually b$ yields a trace not in the language. However, $\psi_2= \eventually a$ is an absolute liveness specification.     
\end{definition}

\begin{definition}[Stable Specification]
A language $\Ll \subseteq \Sigma^\omega$ is a \emph{stable} specification if, and only if, it is closed under suffixes; that is, if $x$ is in $\Ll$, then every suffix of $x$ is also in $\Ll$. Since deleting the first element from $a(\neg a)^\omega$ results in a sequence that does not satisfy $\psi_2$, $\psi_2$ is not stable. However, $\psi_3= \always a \vee \always\eventually b$ is stable. Moreover, a language is a \emph{fairness} specification if and only if it is both a stable specification and absolute liveness specification, e.g., $\psi_4= \always\eventually a$. One can easily conclude that none of the specifications $\psi_1, \psi_2$, and $\psi_3$ are fairness specifications.       
\end{definition}

We give a solution for the translation in Problem~\ref{prob} for \emph{absolute liveness} specifications~\cite{sistla1994safety}.
Adding a prefix to a trace for average reward objectives should not change the average value associated with the trace. This is aligned with the satisfaction of absolute liveness specifications. Moreover, since absolute liveness specifications cannot be rejected for any finite word, they preserve the continual nature of the learning procedure. In the following, we focus on the characteristics of absolute liveness specifications.

The set of absolute liveness specifications is closed under union and intersection.
The proofs mentioned below rely on the property $\varphi \equiv \eventually\varphi$ in LTL notation.  The more general $\omega$-regular proofs use $\varphi \equiv \Sigma^*\varphi$. In addition, closure under intersection requires an ``asterisk:'' the intersection is an absolute liveness specification unless it is empty.
Suppose $\varphi_1$ is equivalent to $\eventually \varphi_1$ and $\varphi_2$ is equivalent to $\eventually \varphi_2$.  Then,
\begin{equation*}
    \varphi_1 \vee \varphi_2 \equiv (\eventually \varphi_1) \vee (\eventually \varphi_2) \equiv \eventually(\varphi_1 \vee \varphi_2) \enspace.
\end{equation*}
Moreover, since for every LTL specification $\varphi$, $\eventually\varphi$ is equivalent to $\eventually\eventually \varphi$,
\begin{equation*}
    \varphi_1 \wedge \varphi_2 \equiv (\eventually\eventually \varphi_1) \wedge (\eventually\eventually \varphi_2) \equiv \eventually((\eventually\varphi_1) \wedge (\eventually\varphi_2)) \equiv \eventually(\varphi_1 \wedge \varphi_2) \enspace.
\end{equation*}

Lemma~2.1 of \cite{sistla1994safety}, which says that the complement of a stable specification different from $\Sigma^\omega$ is an absolute liveness specification and vice versa, can be used to show that the stable specifications are also closed under union and intersection.
The case when either stable specification is $\top$ is easily proved.
Because of these closure properties, $\bigvee_{1 \leq k \leq n}(\eventually\always a_k) \wedge (\always\eventually b_k)$ (where each $a_k$ and $b_k$ is an atomic proposition) is an absolute liveness specification of Rabin index $n$.  This means that there is no upper bound to the number of priorities required by a parity automaton accepting an absolute liveness specification \cite{Wagner79}. The simplest absolute liveness specification is $\top$, which requires one priority.
The automata that accept absolute liveness and stable specifications have special properties discussed next.

\begin{lemma}
\label{th:al-containment}
Let $\mathcal{A}$ be a deterministic Büchi automaton with initial state $s_0$. The automaton $\mathcal{A}$ accepts an absolute liveness specification if and only if, for every reachable state $s$ of $\mathcal{A}$, the language accepted from $s$ contains the language accepted from $s_0$.
\end{lemma}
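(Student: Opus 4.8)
The plan is to prove both directions of the biconditional by directly unfolding the definition of absolute liveness and relating it to languages accepted from individual states. Throughout, let $\Ll_s$ denote the language accepted by $\mathcal{A}$ when started from state $s$, so that $\Ll(\mathcal{A}) = \Ll_{s_0}$. The key bridge between the two formulations is that, because $\mathcal{A}$ is deterministic, reading a finite word $\bar w \in \Sigma^*$ from $s_0$ leads to a unique state $s = \hat\delta(s_0, \bar w)$, and for any infinite continuation $w$ we have $\bar w w \in \Ll_{s_0}$ if, and only if, $w \in \Ll_s$. This equivalence is what lets me translate statements about appending prefixes (the absolute-liveness condition) into statements about language containment between states.

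For the forward direction, assume $\Ll(\mathcal{A})$ is absolute liveness. Let $s$ be any reachable state, witnessed by a finite word $\bar w$ with $\hat\delta(s_0,\bar w) = s$. I want to show $\Ll_{s_0} \subseteq \Ll_s$. Take any $w \in \Ll_{s_0} = \Ll(\mathcal{A})$. By absolute liveness, $\bar w w \in \Ll(\mathcal{A}) = \Ll_{s_0}$; by the determinism equivalence above, $\bar w w \in \Ll_{s_0}$ is equivalent to $w \in \Ll_s$. Hence $w \in \Ll_s$, giving the desired containment $\Ll_{s_0} \subseteq \Ll_s$.

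For the converse, assume that $\Ll_{s_0} \subseteq \Ll_s$ for every reachable state $s$. I must show $\Ll(\mathcal{A})$ is absolute liveness, i.e., for every $\bar w \in \Sigma^*$ and every $w \in \Ll(\mathcal{A})$ we have $\bar w w \in \Ll(\mathcal{A})$ (nonemptiness follows since absolute liveness only needs to be checked when $\Ll(\mathcal{A}) \neq \emptyset$, and if it is empty the containment hypothesis would be vacuous in a way I should note, or one assumes $\Ll(\mathcal{A})$ nonempty as in the definition). Fix such $\bar w$ and $w$, and let $s = \hat\delta(s_0, \bar w)$, which is reachable. Since $w \in \Ll_{s_0} \subseteq \Ll_s$, we get $w \in \Ll_s$, and again by the determinism equivalence this means exactly $\bar w w \in \Ll_{s_0} = \Ll(\mathcal{A})$. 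This establishes absolute liveness.

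The main obstacle I anticipate is making the determinism equivalence ``$\bar w w \in \Ll_{s_0} \iff w \in \Ll_{\hat\delta(s_0,\bar w)}$'' fully rigorous for a \emph{transition-based} acceptance condition, since $\mathcal{A}$ here accepts on transitions taken infinitely often rather than on states. One must verify that the accepting transition structure is genuinely \emph{suffix-invariant}: the set of transitions visited infinitely often by the run on $\bar w w$ starting at $s_0$ coincides with the set visited infinitely often by the run on $w$ starting at $s$, because the two runs share a common infinite tail after the finite prefix $\bar w$ is consumed. Care is also needed with the quantifier ``from any reachable state''—a state unreachable from $s_0$ is irrelevant to $\Ll(\mathcal{A})$, which is why the statement restricts to reachable $s$—and with the empty-language edge case in the definition of absolute liveness. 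Once this suffix-invariance of acceptance is in hand, both directions are short and symmetric.
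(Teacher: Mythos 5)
Your proof is correct and follows essentially the same route as the paper's: both directions hinge on determinism forcing the run of $\bar w w$ to factor through $s = \hat\delta(s_0,\bar w)$, together with the fact that acceptance depends only on the infinite tail of the run. Your explicit statement of the prefix/suffix equivalence and your attention to the transition-based acceptance condition and the empty-language edge case are slightly more careful than the paper's terser argument, but they formalize exactly the same idea.
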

\begin{proof}
Let $u$ be a word that takes the automaton $\mathcal{A}$ from the initial state $s_0$ to a reachable state $s$, and let $w$ be a word accepted from $s_0$.
Suppose first that $\mathcal{A}$ accepts an absolute liveness specification. Since $w$ is accepted from $s_0$, the concatenated word $uw$ is also accepted from $s_0$. Because $\mathcal{A}$ is deterministic, the unique run of $uw$ starting from $s_0$ reaches $s$ after reading $u$ and then continues with $w$. It follows that $w$ is accepted from $s$.
Conversely, suppose that for every reachable state $s$ of $\mathcal{A}$, the language accepted from $s$ contains the language accepted from $s_0$. Let $u$ be a word that takes $\mathcal{A}$ from $s_0$ to $s$, and let $w$ be a word accepted from $s_0$. By assumption, $w$ is accepted from $s$, which implies that the concatenated word $uw$ is accepted from $s_0$. Therefore, $\mathcal{A}$ accepts an absolute liveness specification.
\end{proof}
The ``only-if'' direction of Lemma~\ref{th:al-containment} does not extend to nondeterministic automata. This failure is witnessed by the two-state automaton shown in Fig.~\ref{fig:example_multichain}(a), which recognizes the absolute liveness specification $\eventually\always a$. In this automaton, the unique accepting state recognizes precisely the language $\always a$. An analogous separation result holds for stable specifications.

\begin{lemma}
\label{th:st-containment}
Let $\mathcal{A}$ be a B\"uchi automaton with initial state $s_0$.
The automaton $\mathcal{A}$ accepts a stable specification if and only if,
for every reachable state $s$ of $\mathcal{A}$, the language accepted from $s$
is contained in the language accepted from $s_0$.
\end{lemma}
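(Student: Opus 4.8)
The plan is to prove this as the mirror image of Lemma~\ref{th:al-containment}: that lemma pairs the prefix-insensitivity of absolute liveness with \emph{containment of} $\Ll(\mathcal{A})$ in the language of each reachable state, whereas here the suffix-closure of stability should pair with \emph{containment in} $\Ll(\mathcal{A})$ of each reachable state's language. Write $\Ll_s$ for the language accepted from state $s$, so $\Ll_{s_0} = \Ll(\mathcal{A})$. I would argue both directions directly from the definition of suffix-closure, relying on a single structural fact about B\"uchi acceptance: since acceptance depends only on $\infi(r)$, prepending a finite run to, or deleting a finite run from, the front of a run does not change whether it is accepting. This prefix-insensitivity is what lets me pass freely between a word $uw$ and its suffix $w$ at the level of runs.

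For the ``only if'' direction, assume $\Ll_{s_0}$ is stable (suffix-closed). Let $s$ be reachable via some finite word $u$, and let $w \in \Ll_s$. Concatenating a finite run from $s_0$ to $s$ on $u$ with an accepting run from $s$ on $w$ yields a run from $s_0$ on $uw$; the appended finite prefix leaves $\infi$ unchanged, so this run is accepting and $uw \in \Ll_{s_0}$. Suffix-closure then deletes the first $|u|$ symbols to give $w \in \Ll_{s_0}$, hence $\Ll_s \subseteq \Ll_{s_0}$. For the ``if'' direction, assume $\Ll_s \subseteq \Ll_{s_0}$ for every reachable $s$. Take $x = a_0 a_1 \cdots \in \Ll_{s_0}$ with an accepting run $\seq{s_0, a_0, s', a_1, \ldots}$, where $s'$ is the state reached after $a_0$. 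The tail of this run from $s'$ is an accepting run on the suffix $a_1 a_2 \cdots$, so that suffix lies in $\Ll_{s'}$; since $s'$ is reachable, $\Ll_{s'} \subseteq \Ll_{s_0}$, and the one-step suffix is back in $\Ll_{s_0}$. A routine induction on the number of deleted leading symbols then shows every suffix of $x$ lies in $\Ll_{s_0}$, i.e., $\Ll_{s_0}$ is stable.

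There is no deep obstacle: the only point demanding care is the B\"uchi prefix-insensitivity fact above, which must be invoked symmetrically (as \emph{appending} a finite run in the forward direction and as \emph{truncating} one in the backward direction). The feature worth emphasizing is that, unlike Lemma~\ref{th:al-containment}, determinism is never used here: both directions appeal only to the \emph{existence} of an accepting run from the relevant state, never to its uniqueness. This is precisely why the statement can be asserted for an arbitrary $\omega$-automaton rather than a deterministic one, and it mirrors the remark that the ``only-if'' part of Lemma~\ref{th:al-containment} genuinely fails without determinism.
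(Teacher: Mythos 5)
Your proof is correct and takes essentially the same approach as the paper's: the ``only if'' direction concatenates the run on $u$ with an accepting run on $w$ and then invokes suffix-closure, and the ``if'' direction reads $w$ off the tail of an accepting run of $uw$ and applies containment --- your one-symbol-at-a-time induction is just a minor repackaging of the paper's direct argument on an arbitrary decomposition $uw$. Your closing observation that determinism is never used (only the \emph{existence} of accepting runs) matches the paper's own remark following the lemma.
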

\begin{proof}
Let $u$ be a word that takes the automaton $\mathcal{A}$ from the initial state
$s_0$ to a reachable state $s$, and let $w$ be a word accepted from $s$.
Suppose first that $\mathcal{A}$ accepts a stable specification.
Then the concatenated word $uw$ is accepted from $s_0$.
Since stable specifications are closed under taking suffixes, it follows that
the suffix $w$ of $uw$ is also accepted from $s_0$.
Conversely, suppose that for every reachable state $s$ of $\mathcal{A}$,
the language accepted from $s$ is contained in the language accepted from $s_0$.
Let $u$ take $\mathcal{A}$ from $s_0$ to $s$ along an accepting run of $uw$.
Since $w$ is accepted from $s$, the assumption implies that $w$ is accepted
from $s_0$. Hence, $uw$ is accepted from $s_0$, and therefore
$\mathcal{A}$ accepts a stable specification.
\end{proof}
Note that Lemma~\ref{th:st-containment} does not require determinism.
Combined with Lemma~\ref{th:al-containment}, it proves that a deterministic automaton accepts a fairness specification if, and only if, all its states are language-equivalent.  This implies that a fairness specification accepted by a deterministic (B\"uchi) automaton is accepted by a strongly connected deterministic (B\"uchi) automaton.  Any reachable sink SCC of a deterministic automaton for a fairness specification is, in itself, an automaton for the specification.
The above lemmas show that checking whether a specification is stable or an absolute liveness specification is reducible to checking language containment for deterministic B\"uchi automata.
To solve Problem~\ref{prob}, we make the following assumption.

\begin{assumption}
 Given an MDP $\Mm$ and B\"uchi automaton $\Aa$, we assume that:
1) $\Mm$ is communicating; 
2) $\Aa$ is a GFM automaton; and
3) $\Aa$ accepts an absolute liveness specification.
\end{assumption}

We also study relaxing the communicating assumption on $\Mm$ to weakly communicating, which requires strengthening the specification to fairness in order to retain correctness and convergence guarantees.

\section{Average-Reward RL for Qualitative Objectives}
\label{sec:result}
This section provides a solution for Problem~\ref{prob}.
Let us fix a communicating MDP $\Mm = (S, s_0, A, T, AP, L)$ and an absolute
liveness GFM property $\Aa = (\Sigma, Q, q_0, \delta, F)$ for the rest of this section.  
Our goal is to construct a reward machine $\Rr$ such that we can use an off-the-shelf average reward RL on $\Mm\times\Rr$ to compute an optimal strategy of $\Mm$ against $\Aa$.
Since the optimal strategies are not positional on $\Mm$ but rather positional
on $\Mm{\times}\Aa$, it is natural to assume that the reward machine $\Rr$ takes the structure of $\Aa$ with a reward function providing positive reinforcement with every accepting transition. 
Unfortunately, even for absolute liveness GFM automata $\Aa$, the product
$\Mm\times\Aa$ with a communicating MDP $\Mm$ may not be
communicating.

\begin{figure}[t]
    \centering
    \begin{minipage}[c]{0.48\textwidth}
        \centering
        \begin{tikzpicture}[
            >=latex,
            node distance=15mm,
            auto,
            every state/.style={fill=yellow!20}
        ]
            \node[state, initial, initial text={}, initial where=left] (q1) {$\eventually\always a$};
            \node[state, accepting, right=of q1] (q2) {$\always a$};

            \path[->]
                (q1) edge node[above] {$a$} (q2)
                (q1) edge[loop above] node {$\top$} (q1)
                (q2) edge[loop above] node {$a$} (q2);
        \end{tikzpicture}
        \caption*{(a) A nondeterministic B\"uchi automaton for the absolute liveness specification $\eventually\always a$, used as a counterexample to the ``only-if'' direction of Lemma~\ref{th:al-containment}.}
    \end{minipage} 
    \hfill
    \begin{minipage}[c]{0.48\textwidth}
        \centering
        \begin{tikzpicture}[
            >=latex,
            node distance=10mm,
            auto,
            every state/.style={fill=yellow!20}
        ]
            \node[state, initial, initial text={}, initial where=below] (q0) {$q_0$};
            \node[state, accepting, left=of q0] (q1) {$q_1$};
            \node[state, accepting, right=of q0] (q2) {$q_2$};

            \path[->]
                (q0) edge node[above] {$a$} (q1)
                (q0) edge node {$\neg a$} (q2)
                (q0) edge[loop above] node {$\top$} (q0)
                (q1) edge[loop above] node {$a$} (q1)
                (q2) edge[loop above] node {$\neg a$} (q2);
        \end{tikzpicture}
        \caption*{(b) Automaton for the absolute liveness specification $(\eventually\always a)\,\vee\,(\eventually\always \neg a)$ used in Example~\ref{example:noncommunication}.}
    \end{minipage}
    \caption{Nondeterministic B\"uchi automata illustrating absolute liveness specifications.}
    \label{fig:example_multichain}
\end{figure}

\begin{example}
\label{example:noncommunication}
Assume a communicating MDP $\Mm$ with at least one state labeled $a$ or $\neg a$, and the absolute liveness property $\varphi = (\eventually\always a) \vee (\eventually\always \neg a)$ with its automaton shown in
Fig.~\ref{fig:example_multichain}(b). Observe that any run that visits one of the two accepting states cannot visit the other one.
Hence, the product does not satisfy the communicating property.
\end{example}

\subsubsection*{Reward Machine Construction.}
Let $\Aa = (\Sigma, Q, q_0, \delta, F)$ be an absolute liveness GFM
B\"uchi automaton.
Consider $\Rr_\Aa = (\Sigma_\epsilon, Q, q_0, \delta', \rho)$ where $\delta'(q, a) = \delta(q, a)$ for all $a \in \Sigma$ and $\epsilon$ transitions reset to the starting state, i.e. $\delta'(q, \epsilon) = q_0$.
Note that by adding the reset ($\epsilon$) action from every state of $\Rr$ to its initial state, the graph structure of $\Mm$ is strongly connected.
The reward function $\rho: Q \times (\Sigma\cup\{\epsilon\}) \times Q{\to} \Real$ is such that 
\begin{equation*}
\rho(q, a, q')  =
\begin{cases}
        c & \text{ if $a  = \epsilon$}\\
        1 & \text{ if $(q, a, q') \in F$} \\
        0 & \text{otherwise.}
\end{cases} 
\end{equation*}

\begin{remark}
The above construction that adds resets transitions to the starting state could also be seen from the lens of constructing a GFM parity automaton.
This would be performed as follows. First, transform the original GFM B\"uchi automaton into a nondeterministic parity automaton by assigning parity equal to two for accepting transitions and equal to one for other transitions. Then, add reset transitions with parity equal to three.
Therefore, 
reset transitions are rejecting at the highest priority. This parity automaton preserves the language of the original B\"uchi automaton: since an accepting run involves only finitely many resets, any accepted word effectively consists of a finite prefix followed by a suffix accepted by the original automaton. Because absolute liveness properties are closed under prefix addition, the resulting word remains in the language. Furthermore, adding transitions yields an automaton that simulates the original one. Consequently, by invoking \cite[Lemma~1]{Hahn20}, we conclude that the resulting parity automaton remains GFM.
\end{remark}

\begin{lemma}[Preservation of Communication]
\label{l1}
For a communicating MDP $\Mm$ and reward machine $\Rr_\Aa$ for
an absolute liveness GFM automaton $\Aa$, we have that the product $\Mm{\times}\Rr_\Aa$ is
communicating. 
\end{lemma}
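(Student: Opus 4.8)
The plan is to show that every state of $\Mm{\times}\Rr_\Aa$ can reach the product's initial state $(s_0,q_0)$, and then to combine this with the standing convention that $\Mm{\times}\Rr_\Aa$ contains only states reachable from $(s_0,q_0)$; together these give that any state reaches any other under some strategy, which is exactly the communicating property. The engine of the argument is the reset ($\epsilon$) transition, which from any state $(s,q)$ moves deterministically to $(s,q_0)$, leaving the MDP component untouched while returning the automaton component to $q_0$. This is precisely the mechanism that repairs the failure exhibited in Example~\ref{example:noncommunication}, where in the naive product a run that commits to one accepting region can never leave it.

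First I would fix two arbitrary states $(s,q)$ and $(s',q')$ of the product and argue that $(s',q')$ is reachable from $(s,q)$ with positive probability under some strategy. The first leg reaches $(s_0,q_0)$ from $(s,q)$: take the reset transition to arrive at $(s,q_0)$; then, since $\Mm$ is communicating, there is a finite MDP run from $s$ to $s_0$ of positive probability, which I lift to a product run by choosing, at each step, an available automaton transition (assuming, as is standard and without loss of generality for GFM automata, that $\Aa$ is complete), reaching some $(s_0,\tilde q)$; a final reset then yields $(s_0,q_0)$. The second leg is immediate: by the reachability convention every product state, in particular $(s',q')$, lies on a positive-probability finite run from $(s_0,q_0)$. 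Concatenating the two legs reaches $(s',q')$ from $(s,q)$, and since the pair was arbitrary, $\Mm{\times}\Rr_\Aa$ is communicating.

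The step that needs care is the lifting of the MDP run from $s$ to $s_0$ into the product: I must ensure the automaton component never gets stuck, which is why completeness of $\Aa$ is invoked (a B\"uchi automaton can be completed by a rejecting sink without changing its language or its GFM status, and the reset transition lets the product escape such a sink, so completing is harmless here). Beyond this, the argument is purely structural and does not itself rely on $\Aa$ accepting an absolute liveness specification---absolute liveness is what makes the accompanying reward translation \emph{correct} (cf.\ the preceding Remark), whereas the reset transitions alone suffice to restore \emph{communication}. I expect the only remaining subtlety to be the bookkeeping that each leg is realized by a single strategy with positive reachability probability, which holds because a finite concatenation of positive-probability moves again has positive probability.
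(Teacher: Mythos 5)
Your proposal is correct and follows essentially the same route as the paper's proof: use the communicating property of $\Mm$ to drive the MDP component from $s$ to $s_0$, use a reset ($\epsilon$) transition to return the automaton component to $q_0$, and then invoke the convention that every product state is reachable from $(s_0,q_0)$. Your extra care about the automaton component getting stuck during the lifting (handled via completeness, or equivalently by interleaving resets, which never change the MDP state) is a detail the paper's proof glosses over, but it does not change the argument.
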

\begin{proof}
To show that $\Mm{\times}\Rr_\Aa$ is communicating, we need to show that for arbitrary states $(s, q), (s', q') \in S\times Q$ reachable from the initial state $(s_0, q_0)$, there is a strategy that can reach $(s', q')$ from $(s, q)$ with positive probability. 
Note that since $\Mm$ is communicating, it is possible to reach $(s_0, q'')$ from $(s, q)$ for some $q''$ of $\Rr_\Aa$ using a strategy to reach $s_0$ from $s$ in $\Mm$. We can then use a reset ($\epsilon$) action in $\Rr_\Aa$ to reach the state $(s_0, q_0)$. Since $(s', q')$ is reachable from the initial state $(s_0, q_0)$, we have a strategy to reach $(s', q')$ from $(s, q)$ with positive probability.
\end{proof}

\begin{lemma}[Average and Probability]\label{mm}
There exists a constant $c^* < 0$ such that for all $c < c^*$, positional
strategies that maximize the average reward on $\Mm\times \Rr_\Aa$ will maximize the satisfaction probability of $\Aa$.
\end{lemma}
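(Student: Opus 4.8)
The plan is to exploit the structure that the reset ($\epsilon$) transitions carry a large negative reward $c$, while accepting transitions carry reward $1$ and all others $0$. By Lemma~\ref{l1} the product $\Mm\times\Rr_\Aa$ is communicating, so there is a unique optimal reward rate $r^\star$ attained from every state by some positional strategy, and every recurrent class (BSCC) of an optimal positional strategy has average reward exactly $r^\star$. I would classify the BSCCs of the Markov chains induced by positional strategies into two types: those that never take a reset transition---these are exactly the end-components of $\Mm\times\Aa$---and those that take a reset transition with some positive stationary frequency.

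First I would establish a lower bound on $r^\star$. Let $\beta^\star$ denote the largest long-run frequency of accepting transitions attainable inside a reset-free BSCC; equivalently, $\beta^\star$ is the best average reward obtainable from an accepting end-component of $\Mm\times\Aa$ (and $\beta^\star=0$ if no accepting end-component is reachable, the degenerate case in which the specification is unsatisfiable and every strategy trivially maximizes the satisfaction probability). Since $\Mm$ is communicating, an accepting end-component achieving frequency $\beta^\star$ can be reached from the initial state and then never left; taking no resets thereafter yields average reward $\beta^\star$, so $r^\star \ge \beta^\star$.

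Next I would bound the reward of any reset-using BSCC from above. Because $\Mm\times\Rr_\Aa$ is finite, it admits only finitely many positional strategies and hence finitely many BSCCs; over those BSCCs that use a reset transition, the stationary frequency of resets is bounded below by a constant $f_{\min}>0$. In such a BSCC, accepting transitions contribute at most $1$ per step while resets contribute $c<0$ at frequency at least $f_{\min}$, so its average reward is at most $1 + c\,f_{\min}$. Choosing $c^\star<0$ small enough that $1 + c\,f_{\min} < \beta^\star$ for all $c<c^\star$ (e.g. $c^\star = (\beta^\star-1)/f_{\min}$ when $\beta^\star<1$, and any negative value otherwise) forces every reset-using BSCC to have average reward strictly below $\beta^\star \le r^\star$. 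Consequently, for $c<c^\star$ no optimal positional strategy can have a reset-using recurrent class, so $r^\star=\beta^\star$ and every recurrent class of an optimal strategy is a reset-free accepting end-component of $\Mm\times\Aa$.

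Finally I would transfer this structural conclusion to the satisfaction probability. An optimal positional strategy reaches one of its (accepting) recurrent classes with probability one from the initial state; since each such class visits accepting transitions with positive frequency, it visits them infinitely often, so $\infi(r)\cap F^\times\neq\emptyset$ almost surely and the syntactic satisfaction probability equals $1$. Because $\Mm$ is communicating, an accepting end-component is reachable from every state, so the optimal satisfaction probability is also $1$; invoking the GFM property ($\PSat^\Mm_\Aa = \PSemSat^\Mm_\Aa$) and the product-strategy correspondence then yields a finite-memory strategy maximizing the satisfaction probability of $\Ll(\Aa)$. I expect the main obstacle to be the two quantitative facts that pin down $c^\star$: establishing the uniform lower bound $f_{\min}>0$ on the reset frequency across all reset-using recurrent classes, and arguing that the optimal gain is shared by every recurrent class of an optimal positional strategy, so that ``reset-free with gain $\beta^\star$'' can be upgraded to ``accepting end-component reached almost surely.''
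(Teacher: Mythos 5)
Your overall strategy mirrors the paper's proof: you uniformly lower-bound the long-run reset frequency over the finitely many reset-using recurrent classes (your $f_{\min}$ is the paper's $p_{\tt min}$), choose $c$ negative enough that any reset-using class has gain strictly below what reset-free behavior achieves (your $c^\star$ plays the role of the paper's $c_* = -1/p_{\tt min}$), and conclude that every recurrent class of an average-optimal positional strategy on $\Mm\times\Rr_\Aa$ is reset-free and contains an accepting transition. Your handling of the degenerate case ($\beta^\star = 0$, no reachable accepting end-component) also matches the paper's case where the optimal average reward is $0$. Up to this structural conclusion, the argument is sound.

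The gap is in the final transfer step. You argue that since runs almost surely get absorbed into accepting recurrent classes, ``$\infi(r)\cap F^\times\neq\emptyset$ almost surely and the syntactic satisfaction probability equals $1$,'' and then invoke GFM. But the runs in question live in $\Mm\times\Rr_\Aa$, not in $\Mm\times\Aa$: nothing in your argument prevents an optimal positional strategy from taking finitely many $\epsilon$ (reset) transitions in its transient phase before absorption (your own analysis only forces recurrent classes to be reset-free, and the hard-reset shaping of Section~\ref{sec:result} makes transient resets deliberate). Such a run is not a run of $\Mm\times\Aa$, and the automaton run it carries is broken at every reset: it only witnesses that the label of the suffix \emph{after the last reset} lies in $\Ll(\Aa)$, not that the label of the whole run does. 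This is precisely where the hypothesis that $\Aa$ accepts an absolute liveness specification must enter: writing $r = w\,r'$ with $r'$ the suffix after the last reset, one has $L(r')\in\Ll(\Aa)$, and absolute liveness ($\Sigma^*\Ll(\Aa)\subseteq\Ll(\Aa)$) then gives $L(r)\in\Ll(\Aa)$. Your proof never invokes absolute liveness at any point, so it cannot be complete---the construction is unsound without that hypothesis, since an automaton reset in mid-run would otherwise certify nothing about the full word. Adding this stitching argument (as the paper does at the end of its case analysis) closes the gap and makes your proof coincide with the paper's.
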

\begin{proof}
The proof is in three parts.
\begin{enumerate}
    \item 
First observe that if $c < 0$, then for any average-reward optimal strategy in $\Mm\times \Rr_\Aa$, the expected average reward is non-negative. 
This is so because all other actions except $\epsilon$ actions provide non-negative rewards. Hence, any strategy that takes $\epsilon$ actions only finitely often, results in a non-negative average reward.

\item Let $\Strat^*$ be the set of positional strategies in $\Mm{\times}\Rr_\Aa$ such that the $\epsilon$ actions are taken only finitely often, i.e. no BSCC of the corresponding Markov chain contains an $\epsilon$ transition. 
Let $\Strat^\epsilon$ be the set of remaining positional strategies, i.e., the set of positional strategies that visit an $\epsilon$ transition infinitely often.
Let $0{<}p_{\tt min}{<}1$ be a lower bound on the expected long-run frequency of the $\epsilon$ transitions among all strategies in $\Strat^\epsilon$.
Let $c^* = -1/p_{\tt min}$. 
Observe that for every policy $\sigma' \in \Strat^\epsilon$, the expected average reward is negative and cannot be an optimal strategy in $\Mm\times\Rr_\Aa$.
To see that, let $0 < p \le 1$ be the the long-run frequency of the $\epsilon$ transitions for $\sigma$ and let $0 \leq q < 1$ be the long-run frequency of visiting accepting transitions for $\sigma$. The average reward for $\sigma$ is
\begin{eqnarray*}
\EAvg^{\Mm\times\Rr_\Aa}_\sigma(s_0,q_0) &=& p \cdot c + q \cdot 1 + (1-p-q) \cdot 0 \\
&\leq&  p \cdot c + q \cdot 1 + (1-p-q) \cdot 1  =  p \cdot c + (1-p) \\
&\leq&  p \cdot c^* + (1-p) =  -p/ p_{\tt min} + (1-p) \leq  -1 + (1-p)  \leq -p.
\end{eqnarray*}
Since every optimal policy must have a non-negative average reward, no policy in $\Strat^\epsilon$ is optimal for $c < c^*$.
\item
Now consider an optimal policy $\sigma^*$ in $\Strat^*$. We show that this policy also optimizes the probability of satisfaction of $\Aa$. 
There are two cases to consider.
\begin{enumerate}
    \item 
    If the expected average reward of $\sigma^*$ is $0$, then under no strategy it is possible to reach an accepting transitions (positive reward transition) in $\Mm \times \Rr_\Aa$. Hence, every policy is optimal in $\Mm$ against $\Aa$, and so is $\sigma^*$.
    \item 
    If the expected average reward of $\sigma^*$ is positive, then notice that for every BSCC of the Markov chain of $\Mm\times\Rr_\Aa$ under $\sigma^*$, the average reward is the same. This is so because otherwise, there is a positional policy that reaches the BSCC with the optimal average from all the other BSCCs with lower averages, contradicting the optimality of $\sigma^*$.
    Since for an optimal policy $\sigma^*$, every BSCC provides the same positive average, every BSCC must contain an accepting transition. 
    Hence, every run of the MDP $\Mm$ under $\sigma^*$ will eventually dwell in an accepting component and in the process will see a finitely many $\epsilon$ (reset) transitions. 
    For any such given run $r$, consider the the suffix $r'$ of the run after the last $\epsilon$ transition is taken and let $r = w r'$ for some finite run $w$. 
    Since $L(r')$ is an accepting word in $\Aa$, and since $\Aa$ is an absolute liveness property any arbitrary prefix $w'$ to this run $r'$ is also accepting. 
    This implies that the original run $r$ is also accepting for $\Aa$.
    It follows that for such a strategy $\sigma^*$, the probability of satisfaction of $\Aa$ is $1$, making $\sigma^*$ an optimal policy for $\Mm$ against $\Aa$. \qedhere
\end{enumerate}
\end{enumerate}
\end{proof}

Since our translation from absolute liveness $\omega$-regular objective to reward machines is model-free, the following theorem is immediate.
\begin{theorem}[Convergence of Model-free RL]
\label{thmmain}
  Differential $Q$-learning algorithm for maximizing average reward objective on
  $\Mm \times \Rr_\Aa$ will converge to a strategy maximizing the probability of
  satisfaction of $\Aa$ for a suitable value of $c$.
  Moreover, the product construction $\Mm\times \Rr_\Aa$ can be done on-the-fly and it is model-free.
\end{theorem}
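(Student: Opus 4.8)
The plan is to reduce the first assertion to the convergence theorem for Differential $Q$-learning of~\cite{wan2021learning} by verifying its five hypotheses (a)--(e) on the product $\Mm{\times}\Rr_\Aa$, and then to convert the resulting guarantee about optimal average reward into a guarantee about satisfaction probability via Lemma~\ref{mm}. First I would fix a constant $c < c^*$, with $c^*$ the threshold produced by Lemma~\ref{mm}. Hypothesis~(a), that learning takes place on a communicating MDP, is exactly Lemma~\ref{l1}: the reset ($\epsilon$) transitions of $\Rr_\Aa$ make $\Mm{\times}\Rr_\Aa$ communicating whenever $\Mm$ is. Because the product is communicating, the optimal reward rate $r^\star$ is independent of the start state, and the associated Bellman optimality equation admits a solution unique up to an additive constant, which is hypothesis~(b). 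Hypotheses (c)--(e) concern the learning dynamics rather than the model, and I would discharge them by running the learner under a behaviour rule that selects every action---including the always-available $\epsilon$ reset---with probability bounded below (e.g.\ a strictly mixed or $\epsilon$-greedy exploration scheme) together with a Robbins--Monro step-size schedule; this guarantees that every state--action pair is updated infinitely often and that the ratio of most- to least-updated visit counts stays finite. With (a)--(e) in force, \cite{wan2021learning} yields $\bar r_t \to r^\star$ and that the greedy policy extracted from the learned table is an average-reward optimal positional strategy on $\Mm{\times}\Rr_\Aa$.

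Next I would invoke Lemma~\ref{mm}: since $c < c^*$, every positional strategy maximizing the average reward on $\Mm{\times}\Rr_\Aa$ also maximizes the satisfaction probability of $\Aa$. Hence the limit strategy maximizes $\PSemSat^\Mm_{\Aa}(s_0)$, and reading the $\Rr_\Aa$-component of the product state as finite memory---as already observed for product constructions and reward machines---produces the corresponding finite-memory optimal strategy on $\Mm$ itself. This settles the first sentence of the theorem.

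For the ``on-the-fly and model-free'' clause I would argue that nothing in this pipeline touches the unknown kernel $T$. The reward machine $\Rr_\Aa$ is built syntactically from the given automaton $\Aa$, so its states, nondeterministic transitions $\delta'$, reset edges, and reward function $\rho$ are all known a priori. During interaction the learner observes the current MDP state $s$ (hence its label $L(s)$), maintains the automaton component by applying $\delta'$ (resolving the nondeterminism as part of the action choice) and may fire the controllable $\epsilon$ reset at will; it therefore tracks the product state $(s,u^\Rr)$ without ever enumerating $S{\times}U^\Rr$ in advance and without knowledge of $T$. The transition probabilities enter only through sampled successors, and the immediate reward $\rho^\times$ is computed from the known $\rho$, so the product is assembled incrementally along the trajectory---precisely the model-free, on-the-fly regime Differential $Q$-learning requires.

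The main obstacle is not the structural part, which Lemma~\ref{l1} hands us, but the careful discharge of the learning-side hypotheses of~\cite{wan2021learning}: I expect the delicate points to be (b), the uniqueness of the Bellman solution up to a constant (which relies on the communicating structure rather than merely on the optimal rate being constant), and (d)--(e), the infinitely-often-visit and bounded-frequency-ratio conditions, which must be secured by the exploration policy and which is where the ever-available reset action is genuinely helpful. Everything else reduces to citing the convergence theorem and then applying Lemma~\ref{mm}, which is why the statement follows almost immediately from the preceding development.
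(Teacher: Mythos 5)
Your proposal is correct and takes essentially the same route as the paper, which treats this theorem as an immediate consequence of Lemma~\ref{l1} (the product is communicating), Lemma~\ref{mm} (for $c<c^*$, average-reward-optimal positional strategies maximize satisfaction probability), and the convergence guarantee of Differential Q-learning under conditions (a)--(e); your write-up simply makes explicit the assembly the paper leaves implicit, including the on-the-fly/model-free observation. One small caveat: your assertion that condition (b) (uniqueness of the Bellman solution up to an additive constant) follows from the communicating property is an overclaim---it does not hold in general for communicating MDPs and is a separate standing hypothesis inherited from \cite{wan2021learning}, a point the paper itself also glosses over.
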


As an example, consider the property $\eventually\always a$ and an MDP with two
states and all transitions between states are available as deterministic actions
(Fig.~\ref{fig:example_eventually1} and Fig.~\ref{fig:example_eventually2}). Only one of the states is labeled $a$. An
infinite memory strategy could see $a$ for one step, reset, then see two $a$s,
reset, then see three $a$s and so forth. This strategy will produce the same
average value as the positional strategy which sees $a$ forever without
resetting. However, the infinite memory strategy will fail the property while the
positional one will not. 

\subsubsection*{Shaping Rewards via Hard Resets.}
\label{sec:hardresets}
For a B\"uchi automaton $\Aa$, we say that its state $q \in Q$ is \emph{coaccessible} if there exists a path starting from that state to an accepting transition.
If a state is not coaccessible then any run of the product $\Mm\times \Aa$ that ends in such a state will never be  accepting, and hence one can safely redirect all of its outgoing (even better, incoming) transitions to the initial state with reward $c$ (a hard reset). 
Such hard resets will promote speedy learning by reducing the time spent in such states during 
unsuccessful explorations, and at the same time adding these resets does not make a non-accepting run accepting or vice versa. 
Lemma~\ref{l1}, Lemma~\ref{mm}, and Theorem~\ref{thmmain}  continue to hold with such hard resets.
Introducing hard resets is a reward shaping procedure in that it is a reward transformation~\cite{ng1999policy} under which optimal strategies remain invariant. 
\begin{figure}[t]
        \begin{tikzpicture}[
            >=latex,
            node distance=15mm,
            auto,
            thick,
            every state/.style={fill=yellow!20}
        ]
            \node[state, initial, initial text={}, initial where=left] (q0) {$q_0$};
            \node[state, accepting, right=of q0] (q1) {$q_1$};
            \node[state, right=of q1] (q2) {$q_2$};

            \path[->]
                (q0) edge node[above] {$a$} (q1)
                (q1) edge node[above] {$\neg a$} (q2)

                (q0) edge[loop above] node {$\top$} (q0)
                (q1) edge[loop above] node {$a$} (q1)
                (q2) edge[loop above] node {$\top$} (q2);
             \path[->, dashed, bend right] (q1) edge  (q0);
            \path[->, dashed, bend left] (q2) edge (q0);

        \end{tikzpicture}
        \caption{Automaton of $\eventually\always a$, dashed lines represent resets.}
        \label{fig:example_eventually1}
\end{figure}
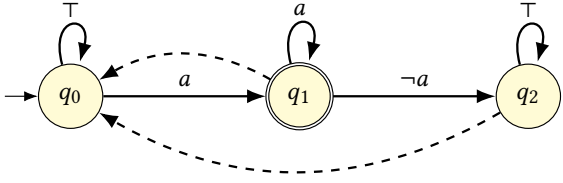

\begin{figure}
        \begin{tikzpicture}[
            >=latex,
            node distance=20mm,
            auto,
            thick,
            every state/.style={fill=yellow!20}
        ]
            \node[state] (q1) {$a$};
            \node[state,right=of q1] (q2) {$\neg a$};

            \path[->]
                (q1) edge[bend left] (q2)
                (q2) edge[bend left] (q1)
                (q1) edge[loop above] (q1)
                (q2) edge[loop above] (q2)
                ;
        \end{tikzpicture}
        \caption{MDP, each transition represents an action.}
        \label{fig:example_eventually2}
\end{figure}
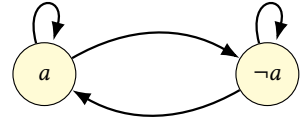

\subsubsection*{Extension to weakly communicating MDPs.}
To relax further the communicating assumption on the MDP, we can apply our method to weakly communicating MDPs but the class of specifications will be more restricted.
The set of weakly-communicating MDPs is the most general set of MDPs such that there currently exists a learning algorithm that can, using a single stream of experience, guarantee to identify a policy that achieves the optimal average reward rate in the MDP~\cite{bartlett2012regal,wan2022convergence}.
 We show how to apply our method in the following for weakly communicating MDPs and fairness specifications.

\begin{lemma}[Preservation of Weak Communication]
\label{lwc1}
For a weakly communicating MDP $\Mm$ and reward machine $\Rr_\Aa$ for
a fairness GFM automaton $\Aa$, the product $\Mm{\times}\Rr_\Aa$ is
weakly communicating. 
\end{lemma}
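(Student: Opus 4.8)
The plan is to prove the claim through the maximal end-component (MEC) characterization of weak communication: an MDP is weakly communicating exactly when it possesses a single MEC that is reachable from every state, with all remaining states transient (a state lies in no MEC precisely when it is transient under every strategy). Accordingly, I would first use the weak communication of $\Mm$ to split its states as $S = S_c \cup S_t$, where $S_c$ is the unique communicating core and $S_t$ consists of states that are transient under every strategy; in a finite MDP this means every state can reach $S_c$, whereas $S_c$ cannot reach $S_t$. I then lift this decomposition to $\Mm{\times}\Rr_\Aa$, grouping the reachable product states according to whether their $\Mm$-component lies in $S_c$ or in $S_t$.

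The first main step is to show that the reachable product states whose $\Mm$-component lies in $S_c$ form a single communicating class. Here I would reuse the argument of Lemma~\ref{l1} applied to the sub-MDP $\Mm|_{S_c}$, which is itself communicating: fixing a hub $h \in S_c$, from any such state one reaches $(h, q_0)$ by first steering the $\Mm$-component to $h$ within $S_c$ and then firing a reset ($\epsilon$) transition, while conversely $(h, q_0)$ reaches every state of the core exactly as in the communicating case. This makes $(h, q_0)$ a universal hub, so the core is strongly connected and hence a candidate MEC. Moreover this MEC is reachable from \emph{every} reachable product state: from a state with $\Mm$-component in $S_t$ one routes into $S_c$ (possible since $S_t$ flows to the core) and then resets, which supplies the reachability condition demanded by weak communication.

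The second step is to argue that every remaining reachable product state is transient. States with $\Mm$-component in $S_t$ inherit transience from $\Mm$: a reset leaves the $\Mm$-component fixed and is followed by a genuine $\Mm$-move, and no action sequence keeps us at a fixed $S_t$-state forever (such a cycle would place the state in a MEC of $\Mm$), so we eventually leave $S_t$ and, since $S_c$ cannot return to $S_t$, never come back. The subtle states are the \emph{exotic} ones, with $\Mm$-component in $S_c$ but automaton component reachable only through $S_t$: each can reach the core by resetting, yet the core cannot reach it back (the automaton state would require labels available only in $S_t$), so individually these states are transient as well.

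I expect the main obstacle to be ruling out a \emph{second} recurrent class formed by a closed cycle among exotic states---precisely the phenomenon behind Example~\ref{example:noncommunication}, where an absolute-liveness-but-not-stable specification splits into disjoint accepting sinks that resets can no longer reconnect once the $\Mm$-component is confined to $S_c$. This is exactly where fairness, rather than mere absolute liveness, is required: stability (suffix-closure) together with absolute liveness forces, via Lemma~\ref{th:st-containment} and Lemma~\ref{th:al-containment}, that all reachable automaton states are language-equivalent, so the specification automaton may be taken strongly connected and no automaton state acts as a distinct absorbing mode. I would use this to show that any cycle among exotic states can always be continued to reproduce the behavior of $q_0$, so no exotic MEC can survive as a closed component; the core then remains the unique MEC reachable from all states, completing the weakly communicating decomposition.
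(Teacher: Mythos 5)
Your skeleton matches the paper's proof: split the state space of $\Mm$ into the communicating core $S_c$ and the set $S_t$ of states transient under every strategy, apply Lemma~\ref{l1} to the sub-product over $S_c$ (fairness being a special case of absolute liveness), and argue transience for the rest. You actually go further than the paper, whose one-paragraph proof stops there: you correctly isolate the real difficulty, namely the ``exotic'' reachable product states $(s,q)$ with $s\in S_c$ whose automaton component can be entered only by reading labels that occur solely in $S_t$, and you see that weak communication fails exactly when such states can form a second recurrent class. The gap is that your proposed resolution of this difficulty does not go through---and cannot.

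First, the language-equivalence you invoke is unavailable in general: the ``only if'' direction of Lemma~\ref{th:al-containment} holds only for deterministic automata (the paper states this explicitly), whereas GFM automata are typically nondeterministic; Lemma~\ref{th:st-containment} alone gives containment in one direction. Second, and more fundamentally, language equivalence of automaton states---or the existence of \emph{some} strongly connected automaton for the language---says nothing about reachability in the product built from the \emph{given} $\Aa$. Concretely: take the two-state deterministic, strongly connected B\"uchi automaton for the fairness property $\always\eventually\neg a$ (states $q_0,q_1$ swap on reading $a$, each has an accepting self-loop on $\neg a$), and the weakly communicating MDP with transient initial state $t$ labeled $a$ and absorbing state $c$ labeled $\neg a$. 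In the product with $\Rr_\Aa$, the state $(c,q_1)$ is reached by the move $t\to c$, carries an accepting self-loop, and is therefore recurrent under the strategy that never resets; yet the core state $(c,q_0)$ can never reach $(c,q_1)$, since re-entering $q_1$ requires reading $a$, which never occurs in $S_c$, and resets only move the automaton to $q_0$. Two mutually unreachable recurrent states coexist, so the product is not weakly communicating even though $\Aa$ is deterministic, strongly connected, GFM, and accepts a fairness property. Hence the exotic-MEC obstacle you flagged is a genuine failure mode that fairness does not exclude; no argument of the kind you sketch can close it (and note the paper's own proof shares this gap, since it silently assumes every reachable product state over $S_c$ lies in the communicating sub-product). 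A smaller flaw: your transience argument for $S_t$-states assumes every reset is followed by a genuine $\Mm$-move, but with the construction as defined ($\delta'(q_0,\epsilon)=q_0$) the reset at any $(s,q_0)$ is a self-loop, so the ``always reset'' strategy makes those states recurrent as well.
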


\begin{proof}
The proof directly follows the proof of Lemma~\ref{l1}. Since we have a weakly-communicating MDP we can partition the states of the MDP into two subsets: the first set is all the transient states under stationary policies, and the second set are the set of states where every pair of states are reachable from each other under  a stationary policy. Since the sub-MDP resulting from the second set is communicating, and a fairness property is also an absolute liveness property, then based on Lemma~\ref{l1}, the sub-product MDP is communicating. On the other hand, all of the states in the first set are transient, and the fairness property is closed under the addition and deletion of prefixes then eventually we reach a state of the communicating sub-product MDP which leads to the weakly communicating property of the product MDP. 
\end{proof}

\section{Average-Reward RL for Lexicographic Objectives}
\label{sec:morl}
This section provides a solution for Problem~\ref{prob_lexo}.
We first note that an optimal policy for the Problem~\ref{prob_lexo} may require infinite memory. Consider the example in Figure~\ref{fig:infinite-memory}. The $\omega$-regular specification $\always\eventually a$ can be satisfied surely by staying in the state $a$. Likewise, the maximum possible average reward of $1$ can be obtained by staying in the state labeled $\neg a$. Surprisingly, there is an infinite memory policy that achieves both. This infinite memory policy visits the state labeled $\neg a$ $k$ times, then visits the state labeled $a$, where $k$ increases towards infinity forever. Any finite memory policy that visits $a$ infinitely often with probability $1$ must visit the state labeled $a$ with positive expected frequency in steady state, say $f > 0$, and thus achieves a suboptimal external average reward of at most $(1-f) < 1$. 
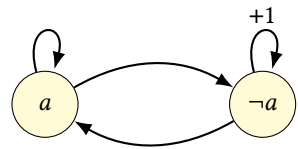
\begin{figure}%
    \centering
    \begin{tikzpicture}[
        >=latex,
        node distance=20mm,
        auto,
        thick,
        every state/.style={fill=yellow!20}
    ]
        \node[state] (q1) {$a$};
        \node[state, right=of q1] (q2) {$\neg a$};

        \path[->]
            (q1) edge[bend left] (q2)
            (q2) edge[bend left] (q1)
            (q1) edge[loop above] (q1)
            (q2) edge[loop above] node {$+1$} (q2);
    \end{tikzpicture}
    \caption{Example showing that the lexicographically optimal policy may require infinite memory. Actions are denoted as arrows; the reward is $0$ unless otherwise indicated. The specification of interest is $\always\eventually a$.}
    \label{fig:infinite-memory}
\end{figure}

However, one can find finite memory policies that achieve an average reward within $\varepsilon$ of the optimal average reward for any $\varepsilon > 0$ while maximizing the probability of satisfaction of the $\omega$-regular specification. Namely, consider the policy which visits the state labeled $\neg a$ $k$ times, then visits the state labeled $a$ for a fixed $k$. This policy obtains an average reward of $\frac{k}{k+2}$ while satisfying the $\omega$-regular specification with probability $1$. For any $\varepsilon > 0$, one can select $k \ge \lceil \frac{2}{\varepsilon} - 2 \rceil$ to ensure that $\frac{k}{k+2} \ge 1 - \varepsilon$.

In the following, we show that for any $\varepsilon>0$, there always exist a finite-memory policy that is $\varepsilon$-optimal for Problem~\ref{prob_lexo} and show how this policy can be constructed.
Our approach to solve Problem~\ref{prob_lexo} reduces the task of producing a lexicographically optimal strategy to one of solving a single average reward problem on an MDP, which can then be solved by any off-the-shelf RL algorithm for average reward on communicating MDPs, e.g. Differential Q-learning. The resulting policy is $\varepsilon$-optimal and finite memory. We then perform a transformation of this learned policy to yield an infinite memory optimal policy.

\begin{figure}[t]
\centering
\begin{minipage}[c]{0.58\textwidth}
\centering
\begin{tikzpicture}[
    >=latex,
    thick,
    every state/.style={draw,circle,fill=cyan!40,minimum size=7mm},
    gridline/.style={draw=gray!45, line width=0.25pt},
    gridframe/.style={draw=black!90, line width=0.9pt}
]

\def\cell{6mm}
\def\nx{6}
\def\ny{5}
\def\tilt{0.5}
\def\foresh{0.75}
\def\layersep{23mm}

\newcommand{\DrawGrid}[2]{%
    \draw[gridframe] (#1,#2) rectangle ++(\nx*\cell,\ny*\cell);
    \foreach \i in {1,...,5}{
        \draw[gridline] (#1+\i*\cell,#2) -- ++(0,\ny*\cell);
    }
    \foreach \j in {1,...,4}{
        \draw[gridline] (#1,#2+\j*\cell) -- ++(\nx*\cell,0);
    }
}

\begin{scope}[xslant=\tilt, yscale=\foresh, transform shape]
    \DrawGrid{0}{0}
    \node[anchor=east] at (-2mm, 1.0\cell) {$b=0$};

    \node[state] (a) at (0\cell, 0\cell) {$a$};
    \node[state] (b) at (1\cell, 1\cell) {$b$};
    \node[state] (c) at (2\cell, 2\cell) {$c$};
    \node[state] (e) at (0\cell, 2\cell) {$e$};
\end{scope}

\begin{scope}[yshift=-\layersep, xslant=\tilt, yscale=\foresh, transform shape]
    \DrawGrid{0}{0}
    \node[anchor=east] at (-2mm, 1.0\cell) {$b=1$};

    \node[state] (f) at (0\cell, 0\cell) {$f$};
    \node[state] (d) at (2\cell, 0\cell) {$d$};
    \node[state] (g) at (2\cell, 2\cell) {$g$};
    \node[state] (h) at (1\cell, 2\cell) {$h$};
\end{scope}

\path[->, very thick]
    (a) edge[loop right] (a)
    (a) edge[bend left] (b)
    (a) edge[bend left] (e)
    (b) edge[bend right] (c)
    (c) edge (e)
    (f) edge[bend left] (h)
    (d) edge[bend left] (f)
    (d) edge[bend left] (g)
    (f) edge (a);

\path[->, very thick, draw=black!60, opacity=0.35]
    (e) edge (h)
    (c) edge (g);

\end{tikzpicture}
\end{minipage}
\hfill
\begin{minipage}[c]{0.38\textwidth}
\centering
\begin{tikzpicture}[
    >=latex,
    node distance=30mm,
    auto,
    thick,
    every state/.style={draw,circle,fill=yellow!20}
]
    \node[state] (q1) {$b{=}0$};
    \node[state, right=of q1] (q2) {$b{=}1$};

    \path[->]
        (q1) edge[bend left] node {$\beta$} (q2)
        (q2) edge[bend left]  node {$(q, l, q') \in F$} (q1)
        (q1) edge[loop above] node {$1{-}\beta$}(q1)
        (q2) edge[loop above] node {$(q, l, q') \not \in F$} (q2);
\end{tikzpicture}
\end{minipage}

\caption{Picture of the construction of the probabilistic reward machine in \eqref{eq:prm}. \textbf{Left:} Two layers corresponding to $b\in\{0,1\}$. \textbf{Right:} Probabilistic changes of the additional bit $b$. The $\epsilon$-transitions are excluded for a better pictorial presentation.}
\label{fig:my_label}
\end{figure}

We define our average reward reduction via a reward machine. This reward machine augments the usual product with an additional bit. When this bit is zero, the agent receives reward based on the external reward function. With probability $\beta$ on every step, this bit flips to one.
When this bit is one, the agent is incentivized to find an accepting edge and is given a punishing reward of $\rho(s,s') + c_1$ where $c_1$ is a constant such that $c_1 + \max_{s,s'} \rho(s,s') < \min_{s,s'} \rho(s,s')$ on every step. 
When the agent finds an accepting edge, the bit flips back to zero. To ensure communication, we add two types of resets, one which resets the automaton state back to its initial state ($\epsilon_1$), and one which sets the extra bit to zero ($\epsilon_2$). Both of these incur some large penalty of $c_2$. We refer to the case when the bit is zero as ``the first layer'' and the case when the bit is one as ``the second layer''. Intuitively, the agent spends most of its time collecting the external average reward, but is occasionally called upon with probability $\beta$ to prove that it can see an accepting edge. This construction is shown pictorially in Figure~\ref{fig:my_label}.
We now define the reward machine formally. Let $\Aa = (\Sigma, Q, q_0,\delta, F)$ be a GFM for an absolute liveness specification and let $\rho(s, s')$ be a given reward function. We construct a probabilistic reward machine $\mathcal{R}_{\mathcal{A}\cdot\rho} = (\Sigma', Q\times \{0, 1\}, (q_0, 0), \delta', \rho')$ where $\Sigma' = (\Sigma \times S \times S \times Q)\cup\{\epsilon_1,\epsilon_2\}$,

\begin{align}
 \delta'((q, b),& (l, s, s', q'))((q', b'))
 =
\begin{cases}
\beta & b = 0, b' = 1, q' \in \delta(q,l), l\not\in\{\epsilon_1,\epsilon_2\}\\
1-\beta & b = 0, b' = 0, q' \in \delta(q,l), l\not\in\{\epsilon_1,\epsilon_2\}\\
1 & b = 1, b' = 1, q' \in \delta(q,l),(q,l,q') \not\in F, l\not\in\{\epsilon_1,\epsilon_2\}\\
1 & b = 1, b' = 0, q' \in \delta(q,l), (q,l,q') \in F, l\not\in\{\epsilon_1,\epsilon_2\}\\
1 & q \neq q_0, q' = q_0, b' = b, l = \epsilon_1 \\
1 & q' = q, b = 1, b' = 0, l = \epsilon_2 \\
0 & \text{otherwise}
\end{cases}
\label{eq:prm}
\end{align}
and
\[
\rho'((q, b), (l, s, s', q'), (q', b')) = 
\begin{cases}
\rho(s, s') & \text{if } b = 0 \text{ and } l\not\in\{\epsilon_1,\epsilon_2\}\\
c_1 + \rho(s, s') & \text{if } b = 1\text{ and } l\not\in\{\epsilon_1,\epsilon_2\}\\
c_2 & \text{if } l\in\{\epsilon_1,\epsilon_2\}
\end{cases}
\]
where $c_1 < \min_{s,s'} \rho(s,s') - \max_{s,s'} \rho(s,s')$, $c_2 < 0$ and $0 < \beta < 1$.

We now show a few results before proceeding to the main theorems.

\begin{lemma}[Preservation of Communication]
\label{lmo1}
For a communicating MDP $\mathcal{M}$ and reward machine $\mathcal{R}_{\mathcal{A}\cdot\rho}$ defined above, the resulting product $\mathcal{M}\times\mathcal{R}_{\mathcal{A}\cdot\rho}$ is communicating.
\end{lemma}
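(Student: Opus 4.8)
The plan is to mirror the argument of Lemma~\ref{l1}, routing every pair of reachable states through the initial (hub) state $(s_0, (q_0, 0))$ of the product. It suffices to establish two reachability facts: (i) from every reachable state $(s, (q, b))$ one can reach the hub with positive probability, and (ii) from the hub one can reach every reachable state with positive probability. Fact (ii) is immediate, since by the standing convention that the product contains only states reachable from its initial state, and that initial state is exactly the hub $(s_0, (q_0, 0))$. Composing a positive-probability path from $(s,(q,b))$ to the hub with one from the hub to $(s',(q',b'))$ then yields a positive-probability path between any two reachable states, which is precisely the communicating property.

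For fact (i), I would proceed in two stages, exploiting that the two reset letters $\epsilon_1$ and $\epsilon_2$ are \emph{silent}: they leave the MDP state unchanged and fire deterministically (with probability one) whenever their guards hold, namely $q \neq q_0$ for $\epsilon_1$ and $b = 1$ for $\epsilon_2$. The crucial point is that the MDP navigation must come \emph{first}, because moving in $\Mm$ perturbs the automaton state $q$ and (probabilistically) the bit $b$, whereas the silent resets only help once the MDP component is already parked at $s_0$. Concretely, since $\Mm$ is communicating there is a sequence of actions reaching $s_0$ from $s$ with positive probability in $\Mm$; lifting it to the product (choosing any available automaton successor at each step) reaches some state $(s_0, (\tilde q, \tilde b))$ with positive probability. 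Here I would note that movement is never blocked in either layer: in layer $0$ the bit flips with probability $\beta$ or stays with probability $1-\beta$, both positive as $0<\beta<1$, while in layer $1$ the automaton transition fires with probability one, so in each case the product transition inherits the positive MDP-transition probability.

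From $(s_0, (\tilde q, \tilde b))$ I would then apply the silent resets to clean up the reward-machine component without leaving $s_0$: if $\tilde b = 1$, fire $\epsilon_2$ to reach $(s_0, (\tilde q, 0))$; then if the resulting automaton state differs from $q_0$, fire $\epsilon_1$ to reach $(s_0, (q_0, 0))$. A short case split on $(\tilde q, \tilde b)$ confirms that the hub is reached in every case, completing fact (i) and hence the lemma.

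The main obstacle is conceptual rather than computational: one must observe that the two reset mechanisms, each restoring a single coordinate, can be \emph{sequenced} to recover the full initial configuration $(s_0, (q_0, 0))$, and that the silence of these resets forces the navigate-then-reset ordering (attempting to reset before navigating would be undone by the subsequent movement through $\Mm$). Once this ordering is fixed, the remaining checks---that navigation retains positive probability in both layers and that the reset guards exactly cover the non-initial configurations---are routine and parallel the proof of Lemma~\ref{l1}.
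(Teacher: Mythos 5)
Your proposal is correct and follows essentially the same route as the paper: the paper's own (one-sentence) proof of Lemma~\ref{lmo1} invokes exactly the mechanism you spell out, namely the argument of Lemma~\ref{l1} adapted to the two reset letters, with $\epsilon_2$ restoring the bit and $\epsilon_1$ restoring the automaton state after the MDP has been navigated back to $s_0$. Your write-up merely makes explicit the hub decomposition and the navigate-then-reset ordering that the paper leaves implicit.
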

\begin{proof}
The claim holds due to the additions of $\epsilon$-transitions $\{\epsilon_1,\epsilon_2\}$ that resets both the automaton ($\epsilon_1$) and the extra bit $b$ to zero ($\epsilon_2$).
\end{proof}

\noindent To develop our formal proof in Theorem~\ref{theorem:epsilon-optimality}, we make use of the following observation.
\begin{lemma}[Binary Probability of Satisfaction]
\label{lemma:binary}
The probability of satisfaction of an absolute liveness specification is either $0$ or $1$ in a communicating MDP.

\end{lemma}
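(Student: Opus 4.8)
The plan is to prove the dichotomy by showing that whenever the optimal satisfaction probability is positive it must equal one; the value zero is then the only remaining possibility. So I would assume $\PSemSat^\Mm_\Aa(s_0) > 0$ and aim to construct a strategy achieving satisfaction probability $1$ from $s_0$. The argument proceeds in three steps, and the single place where \emph{absolute liveness} is indispensable is isolated at the end.

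\emph{Step 1 (uniform positivity).} First I would upgrade positivity at $s_0$ to positivity at every state: $\PSemSat^\Mm_\Aa(s) > 0$ for all $s\in S$. Since $\Mm$ is communicating, from any $s$ there is a finite run reaching $s_0$ with positive probability, generating some finite label prefix $u$. Following this run and then a near-optimal strategy from $s_0$ produces, with positive probability, words of the form $u\,w$ where $w \in \Ll(\Aa)$. Because $\Aa$ accepts an absolute liveness specification, $w \in \Ll(\Aa)$ implies $u\,w \in \Ll(\Aa)$, so such words are accepted and $\PSemSat^\Mm_\Aa(s) > 0$.

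\emph{Step 2 (uniform finite-horizon attempts).} Next, using that $\Aa$ is GFM, I would pass to the product $\Mm\times\Aa$, where $\PSemSat^\Mm_\Aa(s) = \PSat^\Mm_\Aa(s)$ equals the optimal probability of reaching an accepting maximal end-component. Step~1 makes this reachability probability positive from $(s,q_0)$ for every $s$, and since $\Mm\times\Aa$ is finite, there exist a uniform horizon $N$ and a constant $\eta > 0$ such that from any $(s,q_0)$ some strategy reaches an accepting maximal end-component within $N$ steps with probability at least $\eta$. Then I would assemble a strategy on $\Mm$ that proceeds in rounds: each round conceptually restarts the automaton at $q_0$ and runs the $N$-step attempt from the current MDP state; on reaching an accepting end-component it commits to it forever, thereby generating an accepted suffix, and otherwise it begins a new round. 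Each round reaches-and-commits with probability at least $\eta$, so the probability of failing in every round is $\lim_{k\to\infty}(1-\eta)^k = 0$, and the strategy commits almost surely after some finite prefix $U$ followed by a suffix in $\Ll(\Aa)$.

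\emph{The crux} is the ``spoiled automaton state'' that accumulates across failed rounds: a naive retry cannot erase the already-read prefix, and in the product the automaton component may have drifted into a region from which acceptance is impossible---precisely the phenomenon of Example~\ref{example:noncommunication}. Absolute liveness is exactly what neutralises this obstacle: appending the arbitrary finite prefix $U$ to a word in $\Ll(\Aa)$ keeps it in $\Ll(\Aa)$, so the entire generated word $U\cdot(\text{suffix})$ is accepted and the run is satisfying with probability one, giving $\PSemSat^\Mm_\Aa(s_0) = 1$. I would also note that the same dichotomy follows more directly from Lemma~\ref{mm}: over $\Mm\times\Rr_\Aa$, which is communicating by Lemma~\ref{l1}, an average-optimal strategy has either zero average reward---whence no accepting transition is reachable under any strategy and $\PSemSat^\Mm_\Aa(s_0)=0$---or positive average reward---whence it satisfies $\Aa$ with probability one and $\PSemSat^\Mm_\Aa(s_0)=1$.
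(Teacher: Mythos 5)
Your proof is correct and takes essentially the same route as the paper's: the paper likewise reduces the dichotomy to showing that positive satisfaction probability implies probability one, arguing that positivity yields a winning (accepting) end-component which can then be reached with probability $1$ by communication, with absolute liveness implicitly absorbing the finite prefix. Your rounds-with-restarts construction is a rigorous, self-contained expansion of the paper's one-line appeal to ``the communicating nature of the product'' (which, read literally, holds only for the reset-augmented product of Lemma~\ref{lmo1}, not for $\Mm\times\Aa$), and your closing observation via Lemma~\ref{mm} matches the machinery the paper has already established.
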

\begin{proof}
To show this result, we just need to show that if the probability of satisfaction of the specification is positive, then the probability of satisfaction must be $1$. This follows from the fact that if the probability of satisfaction is positive, then there is a winning end-component, and this end-component can be reached with probability $1$ due to the communicating nature of the product.
\end{proof}

\begin{theorem}[Lexicographic Reward Machine $\varepsilon$-optimality]
\label{theorem:epsilon-optimality}
Let $v^*$ be the external average reward obtained under a lexicographically optimal strategy for Problem~\ref{prob_lexo}. For every $\varepsilon > 0$, there exists a threshold $\beta^* > 0$ such that for all $0 < \beta < \beta^*$ there exists a $c^* < 0$ such that for all $c < c^*$ all positional policies that maximize the average reward on $\mathcal{M}\times\mathcal{R}_{\mathcal{A}\cdot\rho}$ maximize the probability of satisfaction of $\mathcal{A}$ on $\mathcal{M}$ and achieves an average reward $v$ such that $v \ge v^* - \varepsilon$. We call such a policy $\varepsilon$-optimal. Additionally, these policies are finite memory policies on $\mathcal{M}$.
\end{theorem}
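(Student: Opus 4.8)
The plan is to prove Theorem~\ref{theorem:epsilon-optimality} by analyzing the structure of optimal positional strategies on the product $\mathcal{M}\times\mathcal{R}_{\mathcal{A}\cdot\rho}$, mirroring the three-part argument of Lemma~\ref{mm} but now controlling two separate penalty parameters ($c$, governing the resets $\epsilon_1,\epsilon_2$) and one sampling parameter ($\beta$, governing how often the second layer is entered). First I would argue, exactly as in part~(1) and part~(2) of Lemma~\ref{mm}, that for $c$ sufficiently negative no optimal positional strategy can take the reset transitions $\epsilon_1,\epsilon_2$ infinitely often: since taking a reset with positive long-run frequency forces the average reward below that of the best reset-free strategy, there is a threshold $c^* < 0$ below which every optimal strategy lies in the set $\Strat^*$ whose BSCCs contain no $\epsilon$-transition. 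The key new wrinkle is that ``reset-free'' optimal strategies must, in every recurrent BSCC, drive the second-layer bit back to zero, which by the transition relation~\eqref{eq:prm} can happen \emph{only} by traversing an accepting edge of $\delta$. Hence any strategy that avoids resets in steady state must hit an accepting transition infinitely often along every recurrent path; by Lemma~\ref{lemma:binary} this means such a strategy satisfies $\mathcal{A}$ with probability~$1$, i.e. it maximizes the satisfaction probability. This establishes the qualitative (lexicographically primary) guarantee.

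Next I would establish the quantitative ($\varepsilon$-optimality) claim. The idea is that the second layer is entered with frequency roughly $\beta$ per step while the bit is zero, and while in the second layer the agent collects the punished reward $c_1 + \rho(s,s')$ until it reaches an accepting edge. I would set up the steady-state balance equations of the Markov chain induced by an optimal strategy in $\Strat^*$: let $\pi_0$ and $\pi_1$ denote the stationary mass on the first and second layers respectively. In steady state the rate of $0\to1$ transitions equals the rate of $1\to0$ transitions, so $\beta\,\pi_0$ equals the rate at which accepting edges are traversed in the second layer; as $\beta\to 0$ this forces $\pi_1 \to 0$, so the fraction of time spent in the punished layer vanishes. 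Consequently the average reward collected by the optimal product strategy converges, as $\beta\to0$, to the best external average reward achievable \emph{subject to} the strategy being able to see an accepting edge with positive frequency in steady state—which is precisely the finite-memory optimum, and this finite-memory optimum can be made arbitrarily close to $v^*$ (compare the construction in Figure~\ref{fig:infinite-memory}, where finite-memory policies achieve $\tfrac{k}{k+2}\to 1 = v^*$). Choosing $\beta^*$ small enough yields $v \ge v^* - \varepsilon$. Finiteness of memory on $\mathcal{M}$ follows because the optimal product strategy is positional on $\mathcal{M}\times\mathcal{R}_{\mathcal{A}\cdot\rho}$, and the reward machine has finitely many states, so it induces a finite-memory strategy on $\mathcal{M}$ as noted in Subsection~\ref{subsec:quantitative}.

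The main obstacle I expect is decoupling the two limits cleanly and controlling them in the right order: for \emph{fixed} $\beta$ one must first push $c$ low enough to eliminate reset-exploiting strategies and force satisfaction probability~$1$ (this is the order the theorem statement quantifies, $\beta$ before $c$), yet the quantitative bound $v \ge v^* - \varepsilon$ is driven by $\beta \to 0$. The delicate point is to show that the external-reward loss incurred by the mandatory second-layer excursions is bounded by a quantity that vanishes with $\beta$ \emph{uniformly} over the admissible strategy class, so that a single $\beta^*$ works; this requires a lower bound, analogous to the $p_{\mathtt{min}}$ bound in Lemma~\ref{mm}, on how quickly an optimal strategy can return the bit to zero (i.e. reach an accepting edge) once the second layer is entered. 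I would obtain such a bound from the finite state space of the product: in any communicating product there is a uniform bound on the expected hitting time of an accepting edge from within a winning end-component, and the punished reward per second-layer step is bounded, so the total expected penalty per excursion is bounded; multiplying by the excursion frequency $\Theta(\beta)$ gives an $O(\beta)$ reward deficit. The secondary subtlety is ruling out ``degenerate'' optimal strategies that exploit the $\rho(s,s')$ term appearing even in the second-layer reward to gain reward rather than racing to an accepting edge; the constraint $c_1 < \min_{s,s'}\rho(s,s') - \max_{s,s'}\rho(s,s')$ is precisely what guarantees the net second-layer reward is strictly below any first-layer reward, so the agent is always strictly incentivized to exit the second layer, and I would invoke this inequality explicitly to close that gap.
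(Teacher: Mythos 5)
Your proposal reproduces the overall architecture of the paper's proof (two-layer analysis, $c$ killing resets, $\beta\to 0$ controlling the quantitative loss), but it has a genuine gap at its quantitative core. Your argument that the optimal product policy's reward converges to $v^*$ as $\beta\to 0$ rests on the assertion that the best external average reward achievable by a policy that sees accepting edges with positive steady-state frequency ``can be made arbitrarily close to $v^*$,'' which you support only by analogy with Figure~\ref{fig:infinite-memory}. In general this assertion \emph{is} the theorem, and the paper proves it with a construction your proposal is missing: define $G$ as the maximal set of product transitions each of which can be visited infinitely often by some policy satisfying $\mathcal{A}$ with probability $1$; observe that a lexicographically optimal (possibly infinite-memory) strategy eventually stays inside $G$, so the average-reward-optimal \emph{positional} policy restricted to $G$ earns external reward at least $v^*$; then assemble $\sigma^*$ by running that positional policy in the first layer and, in the second layer, a $G$-respecting policy that reaches an accepting edge. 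Only with this explicit $\sigma^*$ in hand does one get a lower bound of the form $R_{\sigma^*}(\beta)\ge (1-\beta)v^*+\beta b f_{\sigma^*}(\beta)$ on the optimal product value, which combined with the fact that product rewards never exceed external rewards yields $\bar R_\sigma(\beta)\ge R_\sigma(\beta)\ge R_{\sigma^*}(\beta)\ge v^*-\varepsilon$ for \emph{every} optimal $\sigma$. Your ``uniform $O(\beta)$ deficit'' bound controls the loss of a \emph{given} satisfying policy, but without exhibiting $\sigma^*$ it provides no lower bound on what the argmax policy achieves, so the $\varepsilon$-optimality claim does not follow.

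Two further steps in your outline would fail as stated. First, the claim that any reset-free strategy ``must hit an accepting transition infinitely often along every recurrent path'' is false: a reset-free policy can remain in the second layer forever without ever crossing an accepting edge. Ruling such policies out requires comparing their reward (at most $c_1+\max_{s,s'}\rho(s,s')$, which is below $\min_{s,s'}\rho(s,s')$) against a nonempty class of satisfying policies---again requiring $\sigma^*$---and it fails outright in the probability-$0$ case, where no accepting edge is reachable and optimal policies genuinely do reset infinitely often or sit in the second layer; this is why the paper splits on Lemma~\ref{lemma:binary} and treats that case separately (there the qualitative conclusion is vacuous, since every policy maximizes the satisfaction probability). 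Second, on the quantifier order: you correctly note that $c$ is chosen after $\beta$, but the essential point is that $c^*$ must scale like $1/\beta$. A policy that resets immediately upon entering the second layer earns roughly $(1-\beta)w^*+\beta c$, where $w^*$ is the \emph{unconstrained} external optimum; for fixed $c$ this tends to $w^*\ge v^*$ as $\beta\to 0$, so no single $c$ eliminates reset-exploiting policies for all small $\beta$ when $w^*>v^*$. The paper's choice $c^*\le \frac{b}{\beta}-\frac{1-\beta}{\beta}\max_{s,s'}\rho(s,s')$ makes this dependence explicit, whereas your ``uniformity over the admissible strategy class'' framing does not address it.
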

\begin{proof}
We first note that since the memory required in the resulting policies is defined by $\mathcal{R}_{\mathcal{A}\cdot\rho}$ with fixed parameters, the resulting policies are finite memory.
From Lemma~\ref{lemma:binary}, we can proceed by considering two cases: the absolute liveness $\omega$-regular objective is satisfied with probability $0$ or $1$.
\begin{enumerate}
    \item 
    We first consider the case where the absolute liveness $\omega$-regular objective is satisfied with probability $0$ under any policy, i.e., there is no policy that can satisfy the specification with positive probability. We will show that any $0 < \beta^* < 1$ and $c^* < 0$ works. Since the lexicographically optimal policy only needs to maximize the external average reward, there is an optimal policy that is positional on $\mathcal{M}$ that achieves an external average reward $v^*$. 
    Consider an optimal policy $\sigma$ on the extended product $\mathcal{M}\times\mathcal{R}_{\mathcal{A}\cdot\rho}$. In steady state, this policy either takes resets from the second layer to the first infinitely often, or it takes no reset transitions. If the policy takes no resets, then it obtains a value of $R_\sigma = b + v \le b + v^*$ where the upper bound $b + v^*$ is obtained by applying a positional policy optimal for the external reward to the second layer. Since $\sigma$ is optimal, it must obtain a value of $b + v^*$, which obtains an external average reward of $v^*$ on $\mathcal{M}$. If this policy takes resets infinitely often, then it takes the reset transition immediately upon reaching the second layer. This is due to the penalty $b$: if a trajectory of length $T$ in the second layer that ends in a reset obtains a value of $T (b + v) + c$, then taking the reset first, and followed by the same trajectory in the first layer obtains a value of $T v + c \ge T (b + v) + c$, where $v$ is the average reward collected along this trajectory. Thus, this policy obtains a value of $R_\sigma = (1-\beta) v_1 + \beta c$ in the extended product, which is maximized when $v_1 = v^*$. Since the resets are internal to the agent, this policy collects an external average reward of $v^*$ on $\mathcal{M}$. We have shown that under any $0 < \beta^* < 1$ and $c^* < 0$ when the absolute liveness $\omega$-regular objective is satisfied with probability $0$, an optimal policy on the extended product collects an external average reward of $v^* \ge v^* - \varepsilon$ for every $\varepsilon > 0$.
    \item We now consider the case where the absolute liveness $\omega$-regular objective is satisfied with probability $1$ under some policy. Let $G \subseteq (\mathcal{M} \times \mathcal{A}) \times (\mathcal{M} \times \mathcal{A})$ be the set of transitions such that for every $t \in G$, there exists a policy that satisfies $\mathcal{A}$ with probability $1$ and visits $t$ infinitely often. In other words, $G$ is the maximal set of transitions that can be visited infinitely often without diminishing the probability of satisfying $\mathcal{A}$. Consider a lexicographically optimal strategy. This strategy cannot visit any transition outside of $G$ infinitely often by definition and achieves an external average reward of $v^*$. This implies that there exists a set of positional policies restricted to $G$ that achieves an average reward of at least $v^*$, since there exists optimal positional policies for average reward on MDPs. We now extend these policies to the extended product by applying this policy in the first layer and applying a policy which visits an accepting edge in the second layer without taking resets infinitely often and leaves $G$ finitely many times, with probability $1$. This second layer policy exists due to the definition of $G$. We call this set of policies $\sigma^*$, and abuse notion by treating $\sigma^*$ as if it is a single policy for the rest of the proof. We will select $0 < \beta^* < 1$ and $c^* < 0$ such that $\sigma^*$ is optimal in the extended product, and is $\varepsilon$-optimal on $\mathcal{M}$. Consider all policies $\Pi_1$ that are positional on the extended product, satisfy the specification with probability $1$, and visit the first layer infinitely often with probability $1$. We say that the average reward that such a policy $\sigma \in \Pi_1$ obtains on the extended product for a particular $\beta$ is $R_\sigma(\beta)$. Let $v_\sigma$ be the external average reward that $\sigma$ obtains when $\beta = 0$. Then, we have that $(1-\beta)v_\sigma + \beta b f_\sigma(\beta) \le R_\sigma(\beta) \le v_\sigma$ where $f_\sigma(\beta)$ is an upper bound on the expected time it takes to return to a state in steady state after a transition from the first to the second layer occurs. The first inequality follows by breaking up the average reward into uninterrupted reward collected in the first layer in steady state, and reward collected while waiting to return to the state that was left by transitioning to the second layer in steady state. The second inequality follows from the fact that the reward in the second layer is always less than the first. Note that $f_\sigma(\beta)$ is monotonically increasing in $\beta$. Consider a policy $\sigma \in \Pi_1$ such that $v_{\sigma} < v_{\sigma^*} = v^*$. Then we have that for $\beta \le \min\{\frac{1}{2} , \frac{v^* - v_\sigma}{v^* - bf_{\sigma^*}(\frac{1}{2})}\}$, $$R_\sigma(\beta) \le v_\sigma \le (1-\beta)v_{\sigma^*} + \beta b f_{\sigma^*}(\beta) \le (1-\beta)v_{\sigma^*} + \beta b f_{\sigma^*}(\frac{1}{2}) \le R_{\sigma^*}(\beta)$$
    where the third inequality follows from the monotonicity of $f_{\sigma^*}(\beta)$. Since there are only finitely many policies in $\Pi_1$, one can find a $0 < \beta_{\text{thresh}} < 1$ that satisfies the above inequality for all policies in $\Pi_1$. Thus, for all $\beta < \beta_\text{thresh}$, $\sigma^*$ is an optimal average reward policy on the extended product amongst all the policies in $\Pi_1$. Note that the external average reward collected on $\mathcal{M}$ by any policy $\sigma$ for a fixed $\beta$, denoted by $\bar R_\sigma(\beta)$, is larger than the average reward collected on the extended product $R_\sigma(\beta)$ because all of the rewards on the extended product are less than or equal to the external rewards. By selecting $\beta^* = \min\{\beta_\text{thresh}, \frac{\varepsilon}{v^* - bf_{\sigma^*}(\frac{1}{2})}\} > 0$, we have that
    $$v^* - \varepsilon \le (1-\beta)v_{\sigma^*} + \beta b f_{\sigma^*}(\frac{1}{2}) \le R_{\sigma^*}(\beta) \le \bar R_{\sigma^*}(\beta),$$
    so $\sigma^*$ is $\varepsilon$-optimal for all $0 < \beta < \beta^*$.
    We will now select $c^* < 0$ such that there is an optimal policy in $\Pi_1$. We denote all positional policies by $\Pi$. Note that all policies in $\Pi \backslash \Pi_1$ either remain in the second layer forever, obtaining an average reward of $b < \min_{s,s'} \rho(s,s')$ which is less than the average reward for all policies $\Pi_1$, or take reset transitions infinitely often. Policies which select reset transitions infinitely often obtain an average reward of at most $(1-\beta)\max_{s,s'}\rho(s,s') + \beta c$. One can then select $c^* \le \frac{b}{\beta} - \frac{1-\beta}{\beta} \max_{s,s'} \rho(s,s') < 0$ to ensure that $(1-\beta)\max_{s,s'}\rho(s,s') + \beta c \le b$. Since all policies in $\Pi_1$ obtain an average reward on the extended product greater than $b$, there is an optimal policy $\sigma^*$ in $\Pi_1$ for $c < c^*$ with the previously fixed $\beta$. This $\sigma^*$ is an optimal policy on the extended product under these parameter selections and is $\varepsilon$-optimal when applied to $\mathcal{M}$.
\end{enumerate}
\end{proof}

\begin{theorem}[Lexicographic Reward Machine Optimality]
\label{theorem:lexo}
Let $\beta(i): \mathbb{N} \to (0,1)$ be a sequence such that $\lim_{i\to\infty} \beta(i) = 0$. There exists a threshold $\beta^* > 0$ such that for all $0 < \beta < \beta^*$ there exists a $c^* < 0$ such that for all $c < c^*$ all positional policies that maximize the average reward on $\mathcal{M}\times\mathcal{R}_{\mathcal{A}\cdot\rho}$ have the following property: if we transform this fixed, finite memory policy by setting $\beta = \beta(i)$ on timestep $i$, then this transformed policy is optimal for Problem~\ref{prob_lexo} and infinite memory.
\end{theorem}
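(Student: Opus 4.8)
The plan is to build on Theorem~\ref{theorem:epsilon-optimality}, whose construction already furnishes, for each fixed $\beta$, a positional optimal policy $\sigma^*$ on $\mathcal{M}\times\mathcal{R}_{\mathcal{A}\cdot\rho}$ whose first-layer component collects external average reward $v^*$ while staying inside the maximal winning transition set $G$, and whose second-layer component visits an accepting edge before returning to the first layer (leaving $G$ only finitely often). I would fix $\beta^*$ and $c^*$ exactly as in that theorem, so that $\sigma^*$ is the optimal positional policy on the extended product. The transformation then keeps the decision rule of $\sigma^*$ unchanged but replaces the constant bit-flip probability $\beta$ by the time-varying schedule $\beta(i)$; call the resulting process $\hat\sigma$. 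The first thing to record is that $\hat\sigma$ is infinite memory: its behavior at step $i$ depends on $\beta(i)$, and since $\beta(i)$ takes infinitely many distinct values as $i\to\infty$, no finite-state machine can reproduce the schedule, matching the necessity of infinite memory illustrated by the example of Figure~\ref{fig:infinite-memory}.

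For correctness I would split along Lemma~\ref{lemma:binary} exactly as in Theorem~\ref{theorem:epsilon-optimality}. In the probability-$0$ case there is nothing to satisfy, so it suffices to check that the external average reward converges to the unconstrained optimum $v^*$; this holds because the long-run fraction of steps spent in the second layer is $O(\beta(i))$ and hence vanishes, so the Ces\`aro average is asymptotically that of the first-layer reward-optimal behavior. In the probability-$1$ case I would argue two things. First, acceptance: the run is accepting precisely when the second layer is entered infinitely often, since each second-layer excursion of $\sigma^*$ witnesses an accepting edge before returning; because the excursions are almost surely finite, a bounded-below fraction of steps have $b=0$, and the conditional second Borel--Cantelli lemma shows that infinitely many flips occur almost surely provided the flip probabilities are not summable over first-layer steps. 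I would therefore carry the standing hypothesis $\sum_i \beta(i)=\infty$ (e.g.\ $\beta(i)=1/i$), flagging that this is exactly the condition making the ``$k\to\infty$'' schedule of Figure~\ref{fig:infinite-memory} still visit $a$ infinitely often.

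Second, optimality of the external reward: I would decompose the empirical average into the reward accrued during uninterrupted first-layer sojourns (each contributing at rate $v^*$ in steady state on $G$) and the reward lost during second-layer excursions together with the bounded overhead of returning to the state vacated at each flip. Since every excursion is triggered with instantaneous probability $\beta(i)\to 0$ and has finite expected length, the expected fraction of time outside the first layer tends to $0$, so the $\limsup$ external average reward equals $v^*$. As $v^*$ is by definition the optimal value of Problem~\ref{prob_lexo} among specification-satisfying policies, and $\hat\sigma$ both satisfies $\mathcal{A}$ with probability $1$ and attains $v^*$, it is lexicographically optimal.

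The main obstacle I anticipate is the second-layer accounting under a time-inhomogeneous schedule: because $\beta$ changes at every step, the product chain is no longer time-homogeneous, so the clean stationary-distribution argument of Theorem~\ref{theorem:epsilon-optimality} does not apply directly. I would address this by a comparison argument that sandwiches the behavior over long windows $[N,2N]$ between the homogeneous chains with flip probabilities $\sup_{i\ge N}\beta(i)$ and $\inf_{i\ge N}\beta(i)$, both of which drive the second-layer occupancy to $0$, and then pass to the limit; the delicate point is ensuring the return-overhead term stays uniformly integrable as $\beta(i)\to 0$ so that it contributes nothing to the Ces\`aro limit.
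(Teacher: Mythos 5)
Your proposal follows the same route as the paper's proof: fix $\beta^* = \beta_{\text{thresh}}$ and $c^*$ from Theorem~\ref{theorem:epsilon-optimality}, keep the optimal positional policy $\sigma^*$ from that construction, note that tracking the schedule $\beta(i)$ forces infinite memory, and argue that the external average reward tends to $v^*$ as $\beta(i) \to 0$. The paper's own argument is in fact much shorter than yours: after the memory remark it simply substitutes $\beta(i)$ into the steady-state lower bound $\bar R_{\sigma^*}(\beta) \ge (1-\beta)v^* + \beta b f_{\sigma^*}(\frac{1}{2})$ inherited from Theorem~\ref{theorem:epsilon-optimality} and takes the limit $i \to \infty$, concluding lexicographic optimality from the reward limit alone.

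Where you genuinely diverge---and where your treatment is more careful than the paper's---is the satisfaction half of lexicographic optimality. The paper never argues that the transformed, time-inhomogeneous policy still satisfies $\mathcal{A}$ with probability $1$; it implicitly assumes this carries over from the fixed-$\beta$ case. Your Borel--Cantelli observation is exactly the right worry: the bit flips in the first layer are conditionally independent coin tosses, so if $\sum_i \beta(i) < \infty$ then almost surely only finitely many excursions to the second layer occur, after which the run follows the first-layer, reward-optimal policy on $G$ forever---and that policy need not visit accepting edges at all (in Figure~\ref{fig:infinite-memory} it sits on $\neg a$ forever, failing $\always\eventually a$). So for a summable schedule such as $\beta(i) = 2^{-i}$ the conclusion of the theorem fails, and the statement appears to require your additional hypothesis $\sum_i \beta(i) = \infty$ (e.g.\ $\beta(i) = 1/i$), which the paper states nowhere. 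Likewise, your concern about time-inhomogeneity is real: the paper's substitution of $\beta(i)$ into a stationary-regime bound is exactly the step your windowed sandwiching argument is designed to justify. In short, your proof is an honest completion of a corrected statement, whereas the paper's proof establishes only the reward limit and leaves both the satisfaction argument and the non-summability condition it needs unaddressed.
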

\begin{proof}
We first note that this policy is infinite memory: it must keep track of $\beta(i)$, which requires an increasing number bits to represent exactly as $\beta(i)$ decreases. Thus, as $i$ goes to infinity, the amount of memory required grows in an unbounded manner.
For the rest of the proof, we follow the results and proof of Theorem~\ref{theorem:epsilon-optimality}. Let $\beta^* = \beta_\text{thresh}$ as defined in the proof of Theorem~\ref{theorem:epsilon-optimality}, and let 
\[
c^* \le \frac{b}{\beta} - \frac{1-\beta}{\beta} \max_{s,s'} \rho(s,s') < 0.
\]
We have that there is a set of optimal positional policies $\sigma^*$ on the extended product that satisfy $\mathcal{A}$ with the maximum probability and accumulate an external average reward $\bar R_{\sigma^*}(\beta) \ge (1-\beta)v^* + \beta b f_{\sigma^*}(\frac{1}{2})$ where $v^*$ is the external average reward obtained under a lexicographically optimal strategy. We now substitute $\beta$ with $\beta(i)$. We have that 
$$\lim_{i\to\infty} \bar R_{\sigma^*}(\beta(i)) \ge \lim_{i\to\infty} (1-\beta(i))v^* + \beta(i) b f_{\sigma^*}(\frac{1}{2}) = v^* ,$$
so $\sigma^*$ is a lexicographically optimal policy.
\end{proof}

These results can be extended to weakly communicating MDPs in straightforward fashion.
\begin{lemma}[Preservation of Communication]
\label{lmowc1}
For a weakly-communicating MDP $\mathcal{M}$ and reward machine $\mathcal{R}_{\mathcal{A}\cdot\rho}$ over a fairness specification, the resulting product $\mathcal{M}\times\mathcal{R}_{\mathcal{A}\cdot\rho}$ is weakly-communicating.
\end{lemma}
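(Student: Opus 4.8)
The plan is to mirror the proof of Lemma~\ref{lwc1}, simply replacing the reward machine $\Rr_\Aa$ by $\mathcal{R}_{\mathcal{A}\cdot\rho}$ and Lemma~\ref{l1} by its lexicographic counterpart Lemma~\ref{lmo1}. First I would invoke weak communication of $\mathcal{M}$ to partition its state space as $S = S_T \uplus S_C$, where every state of $S_T$ is transient under all strategies and the sub-MDP induced on $S_C$ is communicating. Since a fairness specification is in particular an absolute liveness specification, the construction of $\mathcal{R}_{\mathcal{A}\cdot\rho}$ is admissible, so Lemma~\ref{lmo1} applies verbatim to the communicating sub-MDP on $S_C$: the corresponding sub-product is communicating, and hence all product states whose $\mathcal{M}$-component lies in $S_C$ form a single communicating class. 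This supplies the recurrent half of the weakly communicating decomposition.

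Second, I would show that every product state $(s,q,b)$ with $s \in S_T$ is transient in $\mathcal{M}\times\mathcal{R}_{\mathcal{A}\cdot\rho}$ under all strategies. The only transitions that do not advance the $\mathcal{M}$-component are the resets $\epsilon_1$ and $\epsilon_2$; by construction $\epsilon_1$ is enabled only when $q \neq q_0$ and moves to $q_0$, while $\epsilon_2$ is enabled only when $b = 1$ and moves to $b = 0$. Thus at most two consecutive $\epsilon$-moves can occur before a genuine MDP action must be taken, so every infinite run performs infinitely many $\mathcal{M}$-transitions. Consequently the $\mathcal{M}$-marginal of any run in the product evolves exactly as $\mathcal{M}$ under an admissible finite-memory strategy, and the transience of each $s \in S_T$ in $\mathcal{M}$ lifts to transience of every $(s,q,b)$ in the product.

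Finally, I would combine the two parts exactly as in Lemma~\ref{lwc1}: the product states over $S_C$ form a communicating set, the product states over $S_T$ are transient, and since a fairness specification is closed under both the addition and deletion of prefixes, every run that traverses the transient states eventually reaches the communicating sub-product with probability one. The prefix-closure feature that distinguishes fairness from mere absolute liveness is precisely what guarantees that, after an arbitrary finite prefix has been consumed while passing through $S_T$, the residual behaviour inside $S_C$ still carries the acceptance structure needed for Lemma~\ref{lmo1} to govern the recurrent component. This establishes that $\mathcal{M}\times\mathcal{R}_{\mathcal{A}\cdot\rho}$ is weakly communicating.

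I expect the main obstacle to be the transience argument for the $S_T$-states: one must rule out that the interplay of the two resets with the extra bit $b$ creates a recurrent loop confined to a single transient $\mathcal{M}$-state. The bounded-consecutive-$\epsilon$ observation above is the crux and should be stated carefully, so that the $\mathcal{M}$-marginal is genuinely driven by $\mathcal{M}$'s dynamics and transience transfers; the remaining steps are routine once this is in place.
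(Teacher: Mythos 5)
Your proposal is correct and takes essentially the same route as the paper's own proof: partition the weakly communicating MDP into its transient and communicating parts, apply Lemma~\ref{lmo1} (admissible since fairness implies absolute liveness) to the communicating sub-MDP, and lift transience of the remaining states to the product, with the prefix/suffix closure of fairness covering the passage through the transient part. Your bounded-consecutive-$\epsilon$ observation actually makes rigorous the transience step that the paper's proof (like that of Lemma~\ref{lwc1}) only asserts in a single sentence.
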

\begin{proof}
The proof follows directly from Lemma~\ref{lwc1} and Lemma~\ref{lmo1}. As stated for Lemma~\ref{lwc1} we can partition the states of the MDP into two sets and the sub-product MDP resulted from the second set with the fairness specification is communicating based on Lemma~\ref{lmo1}. Moreover, since all of the states of the first set, see the proof of Lemma~\ref{lwc1}, are transient the resulting product MDP is weakly communicating.
\end{proof}

\section{Experimental Results}
\label{sec:case-studies}

We implemented the reduction\footnote{The implementation is available at \href{https://plv.colorado.edu/mungojerrie/}{https://plv.colorado.edu/mungojerrie/}.} with hard resets presented in Section~\ref{sec:result}. 
As described, we do not build the product MDP explicitly, and instead compose it on-the-fly by keeping track of the MDP and automaton states independently. We use Differential Q-learning~\cite{wan2021learning} to learn optimal, positional average reward strategies. For our experiments, we have collected a set of communicating MDPs with absolute liveness specifications.\footnote{Case studies are available at \href{https://doi.org/10.5281/zenodo.15489871}{https://doi.org/10.5281/zenodo.15489871}.}
All experiments were conducted on 64-bit machine equipped with a 12th Gen Intel Core i7 processor, 32 GB of RAM, and an NVIDIA A2000 GPU.

We compare with two previous approaches for translating absolute liveness $\omega$-regular languages to rewards: the method of~\cite{hahn2019omega} with Q-learning and the method of~\cite{bozkurt2019control} with Q-learning. The method of~\cite{hahn2019omega} translates a GFM B\"uchi automaton into a reachability problem through a suitable parameter $\zeta$. This reachability problem can be solved with discounted RL by rewarding reaching the target state and using a large enough discount factor. The method of~\cite{bozkurt2019control} uses a state dependent discount factor $\gamma_B$ and a GFM B\"uchi automaton. By using a suitable $\gamma_B$ and large enough discount factor, one can learn optimal strategies for the absolute liveness $\omega$-regular objective.

\subsubsection*{RQ1. How do previous approaches perform in the continuing setting?}
The methods of~\cite{hahn2019omega,bozkurt2019control} may produce product MDPs that are not communicating (see Example~\ref{example:noncommunication}). This means that a single continuing run of the MDP may not explore all relevant states and actions. Thus, these methods are not guaranteed to converge without episodic resetting.
We studied if this behavior affects these prior methods in practice. As a baseline, we include our proposed approach. Instead of tuning hyperparameters for each method, where hyperparameters that lead to convergence may not exist, we take a sampling approach. We select a wide distribution over hyperparameters for each method and sample $200$ hyperparameter combinations for each method and example. We then train for $10$ million steps on each combination. The selected hyperparameter distribution is $\alpha \sim \mathcal{D}(0.01, 0.5)$, $\varepsilon \sim \mathcal{D}(0.01, 1.0)$, $c \sim \mathcal{D}(1, 200)$, $\eta \sim \mathcal{D}(0.01, 0.5)$, $\zeta \sim \mathcal{D}(0.5, 0.995)$, $\gamma_B \sim \mathcal{D}(0.5, 0.995)$, and $\gamma \sim \mathcal{D}(0.99, 0.99999)$ where $\mathcal{D}(a,b)$ is a log-uniform distribution from $a$ to $b$. The end points of these distributions and the training amount are selected by finding hyperparameters which lead to convergence in the episodic setting for these methods.

Figure~\ref{fig:continuing} shows the resulting distribution over runs. A distribution entirely at $0$ ($1$) indicates that all sampled runs produced strategies that satisfy the specification with probability $0$ ($1)$. 
\begin{figure}[tb]
\centering
\begin{tikzpicture}
    \node[anchor=south west, inner sep=0] (image) at (0,0) {
        \includegraphics[scale=0.62]{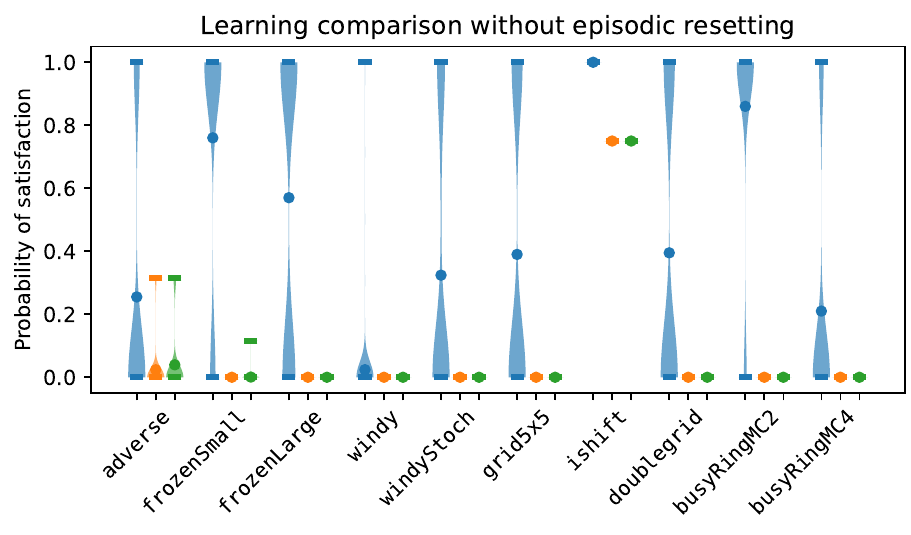}
    };

    \node [
        draw=gray!50,           %
        fill=white,             %
        anchor=north west,      %
        xshift=5pt,             %
        yshift=1.3cm,%
        inner sep=3pt           %
    ] at (image.east) {   %
        \footnotesize           %
        \begin{tabular}{@{}c l@{}} %
            \textcolor{mplblue}{\rule[-1pt]{8pt}{8pt}} & Average (ours) \\
            \textcolor{mplorange}{\rule[-1pt]{8pt}{8pt}} & Hahn et al.~\cite{hahn2019omega} \\
            \textcolor{mplgreen}{\rule[-1pt]{8pt}{8pt}} & Bozkurt et al.~\cite{bozkurt2019control} \\
        \end{tabular}
    };
\end{tikzpicture}
\caption{Comparison of the distributions of probability of satisfaction of learned policies across sampled hyperparameters in the continuing setting. For each distribution, the mean is shown as a circle, and the maximum and minimum are shown as vertical bars. We compare our proposed reduction, the reduction of Hahn et al.~\cite{hahn2019omega} with Q-learning, and the reduction of Bozkurt et al.~\cite{bozkurt2019control} with Q-learning. Episodic resetting is not used.}
\label{fig:continuing}
\end{figure}
\emph{For many examples, prior approaches have no successful hyperparameter combinations, with distributions centered entirely at $0$.} However, our proposed approach always have some hyperparameters that lead to optimal, probability $1$, strategies, as indicated by the tails of the distributions touching the probability $1$ region of the plot.

\subsubsection*{RQ2. How does our method compare to previous approaches when we allow episodic setting?}
By allowing episodic resetting, we can now find hyperparameters for methods of \cite{hahn2019omega,bozkurt2019control} that lead to convergence. We tuned all hyperparameters by hand to minimize training time, while verifying with a model checker that the produced policies are optimal. Table~\ref{tab:experiment} shows learning times, as well as hyperparameters for our reduction. We report the number of states reachable in the MDP and the product, learning times averaged over $5$ runs, the reset penalty $c$, the $\varepsilon$-greedy exploration rate $\varepsilon$, the learning rates $\alpha$ and $\eta$ of the Differential Q-learning, as well as the number of training steps. Note that we do not do any episodic resetting when training with our reduction. This means that the RL agent must learn to recover from mistakes during training, while previous approaches are periodically reset to a good initial state. \emph{Our reduction using Differential Q-learning is competitive with previous approaches while not being reliant on episodic resetting.}

\begin{table*}
	\caption{Learning results and comparison. Hyperparameters used for our reduction are shown. Blank entries indicate that default values are used. The default parameters are $c=-1$, $\varepsilon=0.1$, $\alpha=0.1$, and $\eta=0.1$. Times are in seconds. Superscript~$^\dagger$ indicates results from Q-learning with reduction from~\cite{hahn2019omega}, while superscript $^\ddagger$ indicates Q-learning with reduction from~\cite{bozkurt2019control}. Results for $^\dagger$ and $^\ddagger$ require episodic resetting. All hyperparameters are tuned by hand.}
	\label{tab:experiment}
    \centering
	\footnotesize
	\setlength{\tabcolsep}{4pt}
	\begin{tabular}[c]{l|ccccc|ccccc}
		Name & states & prod. & time & time$^\dagger$ & time$^\ddagger$ & $c$ & $\varepsilon$ & $\alpha$ & $\eta$ & train-steps \\\hline
		\texttt{adverse}    & 202 & 507 & 8.51 & 7.09 & 12.56 & -150 &  & 0.2 &  & 10M   \\
		\texttt{frozenSmall}    & 16 & 64 & 0.99 & 20.23 & 9.88 &  &  &  &  & 500k   \\
		\texttt{frozenLarge}    & 64 & 256 & 4.07 & 3.88 & 8.79 &  &  & 0.02 & 0.02 & 3M   \\
		\texttt{windy}    & 123 & 366 & 1.40 & 1.81 & 2.61 &  & 0.95 & 0.5 & 0.05 & 1M   \\
		\texttt{windyStoch}    & 130 & 390 & 2.97 & 3.91 & 2.53 &  &  & 0.5 &   & 2M   \\
		\texttt{grid5x5}    & 25 & 100 & 0.62 & 1.12 & 1.02 &  &  & 0.5 &  & 200k   \\
		\texttt{ishift}    & 4 & 29 & 0.03 & 0.01 & 0.02 &  &  &  &  & 10k   \\
		\texttt{doublegrid}    & 1296 & 5183 & 16.43 & 3.45 & 3.09 & -2 & 0.5 & 0.05 & 0.01 & 12M   \\
		\texttt{busyRingMC2}    & 72 & 288 & 0.03 & 0.03 & 0.03 &  &  &  & 0.01 & 10k   \\
		\texttt{busyRingMC4}    & 2592 & 15426 & 6.08 & 3.94 & 2.33 &  &  &  & 0.01 & 1.5M\\\hline
	\end{tabular}

\end{table*}

\begin{table*}[t]
	\caption{Learning results for multi-objective case studies. The default parameters are  $\texttt{tol}= 0$, $\beta=0.05$, $\texttt{ep-n}= 1$, $\texttt{ep-l}=1000000$, $\alpha=0.01$, $\eta=0.01$, $\varepsilon=0.1$. Times are in seconds. All hyperparameters are tuned by hand.}
	\label{tab:experimentmulti}
	\centering
	\footnotesize
	\setlength{\tabcolsep}{4pt}
	\begin{tabular}[c]{l|ccccc|c}
		Name & states & prod. & time & Prob. & rew.&  $c$ \\\hline
		\texttt{infmem}    & 2 & 8 & 0.758 & 1 & 0.999&100\\
		\texttt{vessels}    & 24 & 576 & 1.115 & 1 & -0.100 &  50\\
		\texttt{grid4x4c4}    & 16 & 512 & 1.959 & 1 & 0.004 &  50\\
        \texttt{grid4x4c2}    & 16 & 256 & 1.930 & 1 & 0.002 &  50\\
		\texttt{grid5x5c4}    & 25 & 800 & 1.977 & 1 & -0.001 &  60\\
		\texttt{grid5x5c2}    & 25 & 400 & 1.872 & 1 & -0.001 &  60\\
		\texttt{grid4x7}    & 28 & 896 & 1.741 &  1 & 0.000 & 100\\   
		\hline
	\end{tabular}
\end{table*}

\subsubsection*{RQ3. How our approach works in multi-objective settings?} As shown in Table~\ref{tab:experimentmulti}, we successfully satisfied the absolute liveness property with probability one across all case studies. We report the number of states reachable in the MDP and the product, the learning times averaged over five runs, the average reward over five runs, the reset penalty $c$ and $\beta$, the $\varepsilon$-greedy exploration rate $\varepsilon$, the learning rates $\alpha$ and $\eta$ of Differential Q-learning, as well as the number of training steps. 
Note that, also in the multi-objective setting, we do not perform any episodic resetting when training with our reduction. We achieved average rewards that are almost optimal in all cases (the optimal value for the first case study is $1$ and for the rest is $0$).

\section{Related Work}
\label{sec:related-work}

\subsubsection*{Learning for Formal Specifications over Finite Traces.} The development and use of formal reward structures for RL have witnessed increased interest in recent years. For episodic RL, logics have been developed over finite traces of the agent's behavior, including LTL$_f$ and Linear Dynamic Logic (LDL$_f$) \cite{LTLf2, LTLf3}. These logics have equivalent automaton and reward machine representations that have catalyzed a series of efforts on defining novel reward shaping functions to accelerate the convergence of RL algorithms subject to formal specifications \cite{RL_LTL_f, icarte2018using, camacho2019ltl}. These methods leverage the graph structure of the automaton to provide an artificial reward signal to the agent. More recently, dynamic reward shaping using LTL$_f$ has been introduced as a means to both learn the transition values of a given reward machine and leverage these values for reward shaping and transfer learning \cite{velasquez2021dynamic}. There has also been work on learning or synthesizing the entire structure of such reward machines from agent interactions with the environment by leveraging techniques from satisfiability and active grammatical inference \cite{adviceSAT, xu2020SAT, gaon2020reinforcement, XuWuNeiderTopcu21, toro2019learning}.
Other data-driven methods have also been developed recently for satisfying specifications over finite traces. Data-driven distributionally robust policy synthesis approaches are presented in \cite{kordabad2025data,schon2024data}. Data-driven construction of abstractions with correctness guarantees has also been studied \cite{nazeri2025data,schon2024data_binary,kazemi2024data}, which enables policy synthesis against temporal specifications. Data-driven computation of resilience with respect to finite temporal logic specifications is studied in \cite{saoud2024temporal}.

\subsubsection*{Formal Specifications over Infinite Traces.} For the infinite-trace settings, LTL has been extensively used to verify specifications and synthesize policies formally using the mathematical model of a system \cite{BK08,BL06,majumdar2019symbolic,majumdar2024necessary,kazemi2020formal,lavaei2020formal,Sriram_Predictive19,HS_TAC20,HNS2021,schon2022correct,Lavaei_Survey,kazemi2024assume}. Considering the generality of the available results in terms of structure of the underlying MDP, most of the research focuses on discounted reward structures. Despite the simplicity of discounted Markov decision problems, the discounted reward structure (unlike average reward) prioritizes the transient response of the system. 
However, application of the average reward objective because of the restriction over the structure of the MDP is limited. The work \cite{ding2014optimal} proposes a policy iteration algorithm for satisfying specifications of the form $\always\eventually \phi\wedge \psi$ for a communicating MDP almost surely. 
The work \cite{ashok2017value} proposes a value iteration algorithm for solving the average reward problem for multichain MDPs, where the algorithm first computes the optimal value for each of strongly connected components and then weighted reachability to find the optimal policy.
The work \cite{atia2021steady} provides a linear program for policy synthesis of multichain MDPs with steady-state constraints. The work \cite{majumdar2024regret} provides a regret-free learning algorithm for policy synthesis that is based on identifying the structure of the underlying MDP using data with a certain confidence. 

\subsubsection*{RL for Formal Specifications over Infinite Traces.} In the last few years, researchers have started developing data-driven policy synthesis techniques in order to satisfy temporal specifications.
There is a large body of literature in safe reinforcement learning (RL) (see e.g. \citep{garcia2015comprehensive,recht2018tour, efroni2020exploration}). The problem of learning a policy to maximize the satisfaction probability of a temporal specification using discounted RL is studied recently \cite{brazdil2014verification,fu2014probably,sadigh2014learning, bozkurt2019control, hasanbeig2019certified, hasanbeig2019reinforcement, oura2020reinforcement}. 
The work \cite{hahn2019omega} uses a parameterized augmented MDP to provide an RL-based policy synthesis for finite MDPs with unknown transition probabilities. 
It shows that the optimal policy obtained by RL for the reachability probability on the augmented MDP gives a policy for the MDP with a suitable convergence guarantee  

In \cite{bozkurt2019control}
authors provide a path-dependent discounting mechanism for the RL algorithm based on a limit-deterministic B\"uchi automaton (LDBA) representation of the underlying $\omega$-regular specification, and prove convergence of their approach on finite MDPs when the discounting factor goes to one.
An LDBA is also leveraged in \cite{hasanbeig2019certified, hasanbeig2019reinforcement,oura2020reinforcement} for discounted-reward model-free RL in both continuous- and discrete-state MDPs. The LDBA is used to define a reward function that incentivizes the agent to visit all accepting components of the automaton.
The paper \cite{falah2023reinforcement} proposes a translation from $\omega$-regular properties into discounted reward objectives for model-free RL in continuous-time MDPs utilizing similar automata-based reward functions.

These works use episodic discounted RL with discount factor close to one to solve the policy synthesis problem. There are two issues with the foregoing approaches. First, because of the episodic nature of the algorithms they are not applicable in continuing settings. Second, because of high discount factors in practice these algorithm are difficult to converge. On the other hand, recent work on reward shaping for average reward RL has been explored based on safety specifications to be satisfied by the synthesized policy \cite{jiang2021temporal}. In contrast to the solution proposed in this paper, the preceding approach requires knowledge of the graph structure of the underlying MDP and does not account for absolute liveness specifications.

\subsubsection*{Average-Reward RL.} There is a rich history of studies in average reward RL \cite{dewanto2020average, mahadevan1996average, tadepalli1998model}. 
Lack of stopping criteria for multichain MDPs affect the generality of model-free RL algorithms. 
Therefore, all model-free RL algorithms put some restrictions on the structure of MDP (e.g. ergodicity \cite{pmlr-v97-lazic19a, wei2020model} or communicating property). The closest line of work to this work is to use average reward objective for safe RL. The work \cite{singh2020learning} proposes a model-based RL algorithm for maximizing average reward objective with safety constraint for communicating MDPs.
It is worth noting that in multichain setting, the state-of-the-art learning algorithms use model-based RL algorithms.  The work \cite{kretinsky2020finite} studies satisfaction of $\omega$-regular specifications using data-driven approaches. The authors introduce an algorithm  where the optimality of the policy is conditioned to not leaving the corresponding maximal end component which leads to a sub-optimal solution. The authors provide PAC analysis for the algorithm as well. Finally, there is a growing area of research on robust average-reward RL, whereby the transition probability function is defined over an uncertainty set, e.g., a KL divergence set \cite{wang2023model, wang2024robust}. The problem of finding an optimal policy in that context is to maximize the minimum expect reward within that uncertainty set of transition probabilities.

\subsubsection*{Multi-objective Probabilistic Verification.}
Chatterjee, Majumadar, and Henzinger~\cite{chatterjee2006markov} considered MDPs with multiple discounted reward objectives. 
In the presence of multiple objectives, the trade-offs between different objectives can be characterized as Pareto curves. 
The authors of~\cite{chatterjee2006markov} showed that every Pareto optimal point can be achieved by a memoryless strategy and the Pareto curve can be approximated in polynomial time. 
Moreover, the problem of checking the existence of a strategy that realizes a value vector can be decided in polynomial time. 
These multi-objective optimization problems were studied in the context of multiple long-run average objectives by Chatterjee~\cite{Chatte07}. 
He showed that the Pareto curve can be approximated in polynomial time in the size of the MDP for irreducible MDPs and in polynomial space in the size of the MDP for general MDPs.  
Additionally, the problem of checking the existence of a strategy that
guarantees values for different objectives to be equal to a given vector is in polynomial time for irreducible MDPs and in NP for general MDPs. 
Etessami et al.~\cite{EKVY07} were the first to study the multi-objective 
model-checking problem for MDPs with $\omega$-regular objectives.
Given probability intervals for the satisfaction of various properties, 
they developed a polynomial-time (in the size of the MDP) algorithm to decide
the existence of such a strategy.
They also showed that, in general, such strategies may require both randomization and memory. 
That paper also studies the approximation of the Pareto curve with respect to a
set of $\omega$-regular properties in time polynomial in the size of the MDP.
Forejt et al.~\cite{FKNPQ11} studied quantitative multi-objective optimization over MDPs that combines 
$\omega$-regular and quantitative objectives.
Those algorithms are implemented in the probabilistic model checker PRISM~\cite{Kwiatk11}. Multi-objective optimization on interval MDPs is studied by Monir et al.~\cite{monir2024lyapunov} to provide a Lyapunov-based policy synthesis approach.

\subsubsection*{Multi-objective Reinforcement Learning.}
There has been substantial work on lexicographic objectives in RL, including lexicographic discounted objectives~\cite{skalse2022lexicographic}, lexicographic $\omega$-regular objectives~\cite{hahn2021model}, and a combination of safety and discounted objectives~\cite{Bozkurt0P21}.
Hahn et al.~\cite{hahn2023omega} considered the general class of $\omega$-regular objectives with discounted rewards in model-free RL. However, to the best of our knowledge, this paper is the first work that considers lexicographic 
$\omega$-regular objectives with average objectives. 

\subsubsection*{Average-Reward RL for Formal Specifications.} Despite significant progress in data-driven approaches for satisfying $\omega$-regular specifications, there remains a gap in the use of average-reward, model-free RL algorithms for satisfying temporal logic specifications. While the paper \cite{alur2022framework} proves non-existence of robust, optimality-preserving reductions from general LTL specifications to discounted cumulative rewards, it identifies the reduction to average rewards as an open problem. This open problem has been addressed in the paper \cite{le2024reinforcement} that shows that one can translate any specification in the general class of $\omega$-regular properties into average objectives and then approximate them successively with discounted objectives while preserving optimality in the limit in the episodic setting.
Our preliminary work presented at AAMAS 2022~\cite{kazemi2022translating} also aimed to address this gap without relying on successive discounted RL in episodic setting but  by proposing a model-free average-reward RL algorithm tailored to a subclass of LTL specifications known as absolute liveness specifications. We argue that this subclass captures a broad range of practically relevant specifications and is particularly well-suited to the average-reward RL setting.

This manuscript builds on~\cite{kazemi2022translating} and extends the results in the following directions:
\textbf{(a)} we generalize our results from communicating MDPs to the broader class of weakly communicating MDPs;
\textbf{(b)} we incorporate a lexicographic multi-objective optimization framework, wherein the goal is to maximize a mean-payoff objective among policies that satisfy a given liveness specification;
\textbf{(c)} we provide novel automata-theoretic characterizations of absolute liveness and stable specifications; and
\textbf{(d)} we conduct an experimental evaluation demonstrating the effectiveness of the proposed lexicographic multi-objective approach.

\section{Conclusion}
\label{sec:conclusion}
This work addressed the problem of synthesizing policies that satisfy a given absolute liveness $\omega$-regular specification while maximizing a mean-payoff objective in the continuing setting. Our first key contribution is a \emph{model-free translation} from $\omega$-regular specifications to an average-reward objective, enabling the use of off-the-shelf average-reward reinforcement learning algorithms. In contrast to existing approaches that rely on discounted, episodic learning, which require environment resets and may be infeasible in many real-world settings, our approach learns optimal policies in a single, life-long episode without resetting.

Our second contribution is a solution to a \emph{lexicographic multi-objective optimization problem}, where the goal is to maximize a mean-payoff objective among the set of policies that satisfy a given liveness specification. We provide convergence guarantees under the assumption that the underlying Markov Decision Process is communicating.
Importantly, our solutions are \emph{model-free} and do not require access to the environment's transition structure or its graph representation. This removes the common assumption in prior work that synthesis requires the computation of end components in the product MDP.

We implemented our approach using Differential Q-learning and evaluated it on a range of case studies. The experimental results demonstrate that our method reliably converges to optimal strategies under the stated assumptions and outperforms existing approaches in the continuing setting. These results support the important and often overlooked hypothesis that \emph{average-reward RL is better suited to continuing tasks than discounted RL}.
As future work, we plan to explore the use of \emph{function approximation}, with the aim of enabling average-reward RL to achieve the same level of success for continuing tasks as discounted RL has achieved in episodic settings.

\begin{acks}
The research of S. Soudjani is supported by the following grants: EIC 101070802 and ERC 101089047.
This research was also supported in part by the National Science Foundation (NSF) through CAREER Award CCF-2146563. Ashutosh Trivedi holds the position of Royal Society Wolfson Visiting Fellow and gratefully acknowledges the support of the Wolfson Foundation and the Royal Society for this fellowship.

\end{acks}

\printbibliography
\appendix

\end{document}